\documentclass{article}
\usepackage{iclr2027_conference,times}
\usepackage{url}

\usepackage[T1]{fontenc}
\usepackage[utf8]{inputenc}
\usepackage{amsmath,amsthm}
\usepackage{amssymb}
\usepackage{booktabs,tabularx,array,multirow}
\usepackage{graphicx,xcolor}
\usepackage{colortbl}
\usepackage{enumitem}
\usepackage{tikz}
\usetikzlibrary{arrows.meta,positioning}
\usepackage{float,etoolbox}
\usepackage{algorithm,algpseudocode}
\usepackage{listings}
\definecolor{codecomment}{RGB}{78,139,149}
\definecolor{codeaccent}{RGB}{202,48,126}
\lstdefinestyle{paperpython}{
 language=Python, basicstyle=\ttfamily\fontsize{8.5}{10.5}\selectfont,
 keywordstyle=\color{codeaccent}, commentstyle=\color{codecomment},
 stringstyle=\color{codeaccent}, columns=fullflexible, keepspaces=true,
 showstringspaces=false, numbers=none, frame=none, aboveskip=5pt, belowskip=4pt,
 morekeywords={randn_like,sample_time,drop_condition,moment_loss,moa_loss,
 bio_weights,time_gate,bridge_coefficients,noise_strength,correct_prediction},
 breaklines=false, tabsize=4
}

\usepackage{hyperref}
\newtheorem{proposition}{Proposition}

\newcommand{\E}{\mathbb E}
\newcommand{\R}{\mathbb R}
\newcommand{\KL}{D_{\mathrm{KL}}}
\newcommand{\norm}[1]{\left\lVert #1\right\rVert}
\newcommand{\best}[1]{\textbf{#1}}
\newcommand{\second}[1]{\underline{#1}}
\title{PerturbBridge: \\
Direct Phenotypic Supervision \\
for Cellular Perturbation Generation}
\author{Anonymous authors}
\usepackage{flafter}
\usepackage[misc]{ifsym}
\AtBeginEnvironment{equation}{\footnotesize}
\AtBeginEnvironment{equation*}{\footnotesize}
\AtBeginEnvironment{align}{\footnotesize}
\AtBeginEnvironment{align*}{\footnotesize}
\AtBeginEnvironment{gather}{\footnotesize}
\AtBeginEnvironment{gather*}{\footnotesize}
\AtBeginEnvironment{multline}{\footnotesize}
\AtBeginEnvironment{multline*}{\footnotesize}

\author{Yucheng Yang\textsuperscript{1}, Xiaoxiao Sun\textsuperscript{1}, Huangqingbo Sun\textsuperscript{1}, Elaine Sui\textsuperscript{1}
\\ \textbf{Emma Lundberg\textsuperscript{1},  Serena Yeung-Levy\textsuperscript{1}$^{\textrm{\Letter}}$} \\
\textsuperscript{1} Stanford University}

\iclrfinalcopy 

\begin{document}
\maketitle

\begin{abstract}
    Virtual cell models should capture phenotypes induced by specific perturbations. However, image-based models are typically trained and evaluated to match real-image distributions, which alone does not explicitly prioritize perturbation-specific biological responses. Recent approaches add biological signals through reinforcement-learning post-training or inference-time steering, raising a central question: \textbf{where should biological response information enter the generative pipeline?} We introduce PerturbBridge, a stochastic data-to-data bridge model that directly supervises perturbation-specific phenotypes in predicted perturbed images during training. PerturbBridge jointly optimizes bridge regression and differentiable biological response losses, with biological gradients passing through a frozen image encoder. We further introduce perturbation-response evaluations based on downstream classification and retrieval. Across three microscopy benchmarks for perturbation profiling and subcellular imaging, PerturbBridge surpasses CellFlux in biological fidelity, image quality, and sampling efficiency. On BBBC021, mechanism-of-action classification improves by 5.66 and 7.5 percentage points with morphological and learned visual features, respectively. On JUMP and AllenCell, retrieval Hit@10 and Hit@1 improve by 1.77 and 7.15 percentage points, respectively. On BBBC021 and JUMP, PerturbBridge achieves lower FID with 80\% fewer model evaluations than CellFlux. Overall, these results show that directly supervising perturbation responses during training improves phenotypic fidelity while enabling efficient cellular image generation. Code is available at \url{https://github.com/yyc1215lll-creator/PerturbBridge}.
\end{abstract}


\begin{figure}[!t]
\centering
\includegraphics[width=\linewidth]{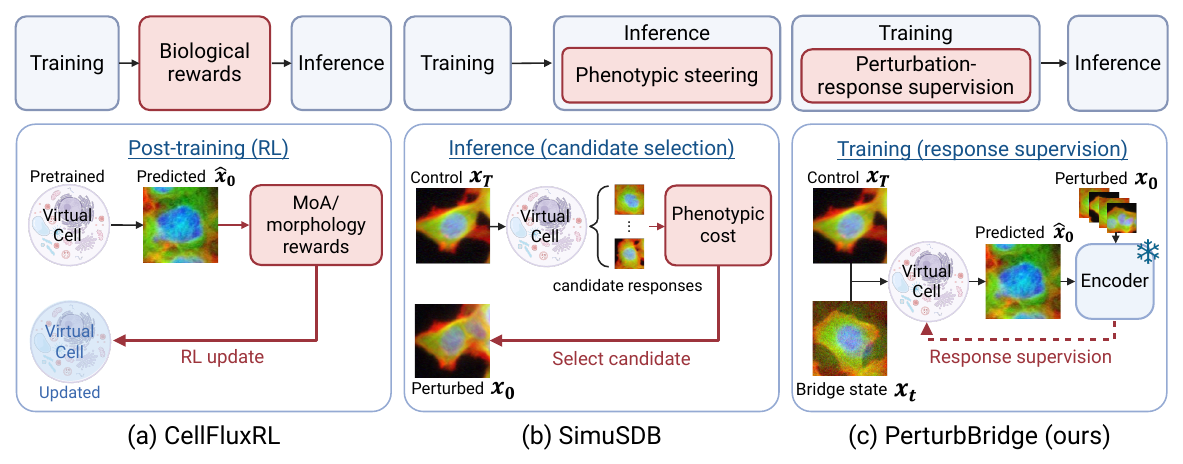}
\caption{\textbf{Three mechanisms for incorporating biological response signals.} (a) CellFluxRL uses biological rewards for RL post-training. (b) SimuSDB uses phenotypic costs to select candidate transitions during sampling. (c) PerturbBridge jointly optimizes bridge regression and biological response losses on predicted perturbed images during generative training. Conditioning inputs and optional test-time selection in CellFluxRL are omitted for clarity.}
\label{fig:where-biology-enters}
\vspace{-4mm}
\end{figure}

\section{Introduction}
\label{sec:intr}

Virtual cell models aim to predict how cells respond to interventions such as drug treatments and gene disruptions~\citep{bunne2024virtual}. For image-based virtual cell modeling, phenotypic profiling characterizes these responses by measuring cell shape, staining intensity, and organelle organization from microscopy images. These measurements support high-throughput compound screening by comparing treated cells with controls. Imaging assays such as Cell Painting~\citep{bray2016cellpainting} capture multiple cellular components, providing complementary views of perturbation-induced changes. Recent methods such as PhenDiff~\citep{bourou2024phendiff}, IMPA~\citep{palma2025impa}, and CellFlux~\citep{zhang2025cellflux} use image-based generative models to translate between cellular conditions or transport control populations to perturbed populations. For biological applications, however, generated images should capture the phenotype associated with the requested perturbation, not merely resemble the overall distribution of real perturbed data. Conditional generative objectives learn associations between images and perturbations from data, while explicit biological response supervision can further emphasize the features and labels relevant to downstream analysis.


Biological signals can be introduced at different stages of the generative pipeline. CellFluxRL (Fig.~\ref{fig:where-biology-enters}(a); \citealp{wu2026cellfluxrl}) uses biological evaluators as rewards for RL post-training, while SimuSDB (Fig.~\ref{fig:where-biology-enters}(b); \citealp{zhang2026simusdb}) uses phenotypic costs to steer candidate transitions during bridge sampling. These approaches motivate a central question: \textbf{where should biological response information enter the generative pipeline?} We investigate direct supervision of perturbation-specific responses during training of the virtual cell model. We introduce \textbf{PerturbBridge} (Fig.~\ref{fig:where-biology-enters}(c)), a stochastic data-to-data bridge model that supervises phenotypic responses directly on predicted perturbed images during generative training.

Specifically, PerturbBridge jointly optimizes bridge regression and differentiable biological response losses on the features of predicted perturbed images. Gradients from the biological response losses pass through a frozen encoder to update the generative model. Targets come from condition-specific statistics of real perturbed training data and available biological labels. Across BBBC021~\citep{caie2010bbbc021}, JUMP~\citep{chandrasekaran2023jump}, and AllenCell~\citep{allen_cell_explorer_drug_perturbation_pilot}, we use condition-moment matching and InfoNCE-style contrastive objectives~\citep{oord2018representation,khosla2020supcon}, adapting the positive and negative sets to the conditions and annotations of each dataset. On BBBC021, the contrastive objective groups positive prototypes by mechanism of action (MoA) to encourage shared phenotypic features across related perturbations.

To test whether this supervision improves perturbation-response fidelity, we evaluate whether generated images preserve the requested biological mechanism and perturbation identity. We use downstream MoA classification and perturbation retrieval, respectively, alongside image-distribution metrics such as FID~\citep{heusel2017fid}. To assess gains beyond the supervising encoder, we use DINOv2~\citep{oquab2024dinov2} representations and CellProfiler~\citep{carpenter2006cellprofiler} features, neither of which is used for biological response supervision.

Across BBBC021~\citep{caie2010bbbc021}, JUMP~\citep{chandrasekaran2023jump}, and AllenCell~\citep{allen_cell_explorer_drug_perturbation_pilot} datasets, PerturbBridge improves biological evaluation scores and image-distribution metrics relative to CellFlux. On BBBC021, DINOv2-based MoA accuracy increases from 67.4\% to 74.9\%, with accuracy also increasing from 72.15\% to 77.81\% under CellProfiler features. Gains also extend to unseen-compound MoA classification under the CellFlux protocol and DINOv2-based perturbation retrieval on JUMP and AllenCell.

Ablations compare PerturbBridge with a Bridge-only model and with biological response supervision applied to perturbed-image predictions reconstructed from CellFlux velocity outputs. These comparisons validate the integrated bridge and supervision design. Using bridge-specific samplers~\citep{zhang2025ecsi,zheng2025dbim}, PerturbBridge achieves lower FID on BBBC021 and JUMP with fewer model evaluations for generation than CellFlux.

We make three contributions:
\begin{enumerate}[leftmargin=*,itemsep=3pt,topsep=3pt,parsep=0pt]
\item \textbf{Direct phenotypic response supervision during generative training.}
We introduce PerturbBridge, which jointly optimizes stochastic-bridge regression and perturbation-specific biological response losses on predicted perturbed images.
\item \textbf{Evaluation of perturbation-specific biological responses.}
We assess generated images through downstream MoA classification and perturbation retrieval using representations not employed for biological response supervision. Our analysis of FID provides theoretical support for evaluating biological responses alongside image-distribution metrics.
\item \textbf{Empirical validation across datasets and sampling efficiency.}
Across BBBC021, JUMP, and AllenCell, PerturbBridge improves biological evaluation scores and image-distribution metrics over CellFlux, with a reduced sampling budget.
\end{enumerate}

\section{Related work and problem setting}
\label{sec:background}
\subsection{Related work on cellular generation and biological response supervision}
\noindent\textbf{Virtual-cell modeling and image generation.}
The broader virtual-cell program spans molecular, transcriptional, and morphological responses~\citep{SLEPCHENKO20101,walkerthevirtualcell,you2004toward,bereketmodelling,hung2024lumic,lamiable2023revealing,10.1093/bioinformatics/btz158}. CellOT~\citep{bunne2023cellot} learns perturbation-response transport from unpaired cell measurements, while GEARS~\citep{roohani2024gears} predicts transcriptional responses using gene relationships. Their biological goals are related, but their outputs and evaluation protocols differ from multichannel image generation. For cellular microscopy, PhenDiff~\citep{bourou2024phendiff} uses conditional diffusion translation, IMPA~\citep{palma2025impa} separates cellular content from perturbation-associated style, and CellFlux~\citep{zhang2025cellflux} uses control-initialized flow matching. These methods are baselines in our experiments.

\noindent\textbf{Biological response supervision and generative model design.}
CellFluxRL~\citep{wu2026cellfluxrl} and SimuSDB~\citep{zhang2026simusdb} incorporate biological signals through RL post-training and inference-time steering, respectively. 
\citet{su2026stochastic} introduce stochastic training perturbations to alleviate sparse supervision in distribution-to-distribution learning. PerturbBridge complements these approaches with perturbation-specific objectives on predicted perturbed images.

\begin{figure}[t]
\centering
\includegraphics[width=\linewidth]{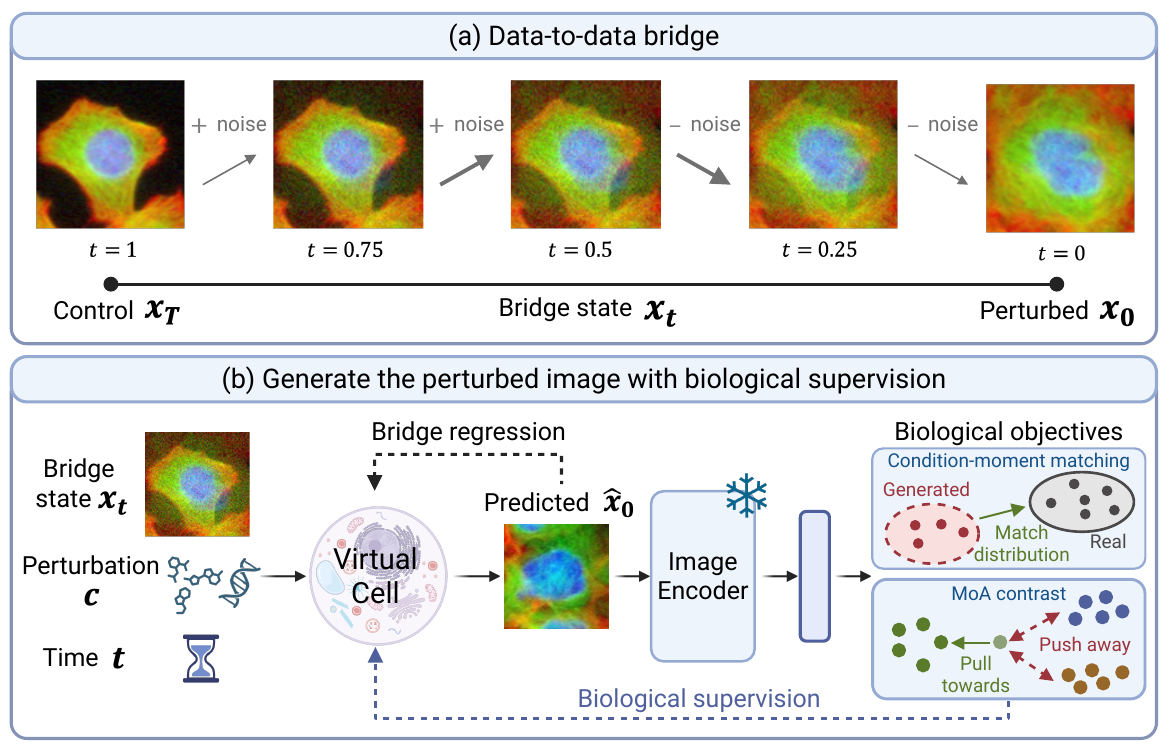}
\caption{\textbf{Training framework of PerturbBridge.}
(a) A stochastic data-to-data bridge connects the control image $x_T$ at $t=1$ to the perturbed image $x_0$ at $t=0$, with noise strongest at the midpoint and vanishing at both ends.
(b) Conditioned on the bridge state $x_t$, perturbation $c$, and time $t$, the generative model predicts the perturbed image $\hat{x}_0$. A frozen image encoder extracts features for MoA contrast and condition-moment matching. These objectives encourage separation across MoA classes and align condition-specific feature means and variances with real perturbed data. Biological gradients pass through the encoder to update the generative model jointly with bridge regression.}
\label{fig:framework}
\end{figure}

\subsection{Problem setting and generative modeling}
\label{sec:problem-setting}
Let $B$ denote experimental context, $C$ the requested perturbation, $X_T$ a control-image random variable, and $X_0$ a perturbed-image random variable. Uppercase symbols denote random variables, and lowercase symbols denote their realizations, such as a specific control image $x_T$ or perturbed image $x_0$. For a fixed context $b$ and perturbation $c$, we seek a conditional generative distribution $K_\theta(x_0\mid x_T,c,b)$ such that
\begin{equation}
\int K_\theta(x_0\mid x_T,c,b)\,p_{\rm ctrl}(x_T\mid b)\,dx_T
\approx p_{\rm pert}(x_0\mid c,b).
\label{eq:task}
\end{equation}
$b$ defines the sampling and evaluation strata, while the control image provides relevant information.

Training pairs perturbed and control images using a joint sampling
distribution $\pi_{c,b}(x_0,x_T)$ for each perturbation $c$ and
experimental context $b$. Pairing within the same batch or plate keeps the experimental context consistent. One possible construction samples the two images independently from their respective populations within each stratum.


\noindent\textbf{Stochastic bridge and image prediction.}
Diffusion bridges connect informative source and target distributions through stochastic intermediate states \citep{zhou2024ddbm,liu2023i2sb}. A Gaussian stochastic interpolant takes the form
\begin{equation}
X_t=\alpha_tX_0+\beta_tX_T+\gamma_t Z,
\qquad Z\sim\mathcal N(0,I),
\label{eq:bridge-general}
\end{equation}
where $Z$ is independent of the image pair. We use a time coordinate $t$, with $t=1$ at the control and $t=0$ at the perturbed target. The coefficients satisfy $(\alpha_0,\beta_0,\gamma_0)=(1,0,0)$ and $(\alpha_1,\beta_1,\gamma_1)=(0,1,0)$. Hence, PerturbBridge predicts the perturbed image from an intermediate bridge state, allowing biological response objectives to act directly on that prediction. Section~\ref{sec:method} specifies the path coefficients and training objective.



\section{Method}
\label{sec:method}
Fig.~\ref{fig:framework} summarizes PerturbBridge, which jointly optimizes bridge regression and biological response losses on predicted perturbed images. We instantiate the stochastic bridge in Equation~\eqref{eq:bridge-general} with a linear interpolation between control and perturbed images and added Gaussian noise \citep{zhou2024ddbm,zhang2025ecsi}:
\begin{equation}
\alpha_t=1-t,\qquad\beta_t=t,\qquad
\gamma_t=2\gamma_{\max}\sqrt{t(1-t)}.
\label{eq:bridge}
\end{equation}
The added noise vanishes at both endpoints and is largest at $t=1/2$. Setting $\gamma_{\max}=0$ yields the deterministic linear path $X_t=(1-t)X_0+tX_T$.

\subsection{Predicting \texorpdfstring{$x_0$}{x0} with biological response losses}
\label{sec:x0}
\noindent\textbf{Why predict the perturbed target image?}
JiT~\citep{li2025backtobasics} motivates direct data prediction through the manifold hypothesis: observed data can occupy a low-dimensional structure within a high-dimensional observation space, whereas noise-containing targets require modeling additional off-manifold variation. This motivates predicting the perturbed image in pixel space.


Image prediction also provides a shared output for pixel-wise regression and biological response supervision. Feature-based losses make errors in encoded phenotypic properties contribute directly to training; App.~\ref{sec:taskmetric} gives a local mathematical interpretation.

\noindent\textbf{Perturbed-image prediction and bridge regression.}
A 1,024-dimensional perturbation embedding is added to the time embedding. The generative model predicts $\hat X_0=D_\theta(X_t,t,c)$ using a preconditioned parameterization.
We train this prediction with weighted pixel-wise mean squared error:
\begin{equation}
\mathcal L_{\rm bridge}
=\E\left[\frac{w(t)}{d_x}\norm{\hat X_0-X_0}^2\right].
\label{eq:x0loss}
\end{equation}
Here, $d_x$ is the number of image entries and $w(t)$ is the preconditioning-based regression weight. The expectation is over training image pairs and perturbation conditions, sampled times $t$, and Gaussian noise $Z$. As in Section~\ref{sec:problem-setting}, $\hat X_0$ denotes a random variable and $\hat x_0$ a realization. The control $X_T$ enters through the bridge state and sampling equations, rather than as a separate network input. App.~\ref{app:sampler} specifies the preconditioning, regression weight, and relation to the fully endpoint-conditioned bridge score.

\noindent\textbf{Biological features and supervision targets.}
Let $f_i=\phi(\hat x_{0,i})\in\R^{d_f}$, where $\phi$ is a frozen Inception-v3 encoder~\citep{heusel2017fid} with $d_f=2048$. The encoder parameters remain fixed while gradients from the biological response losses pass through it to update the generative model. The biological response supervision pipeline uses the first three image channels with differentiable preprocessing; the bridge regression uses all image channels. Reference statistics and prototypes are constructed from real perturbed training data. The biological response objectives use the condition statistics and biological annotations available in each dataset.

\noindent\textbf{Condition-moment matching.}
When real perturbed training data are grouped by perturbation condition, their feature moments provide supervision targets. For each valid condition group $c$ in a minibatch, let $\hat\mu_c,\hat v_c$ be the gate-weighted feature mean and diagonal variance of predicted perturbed images, and $\mu_c,v_c$ the corresponding statistics of real perturbed training data. We use
\begin{equation}
\mathcal L_{\rm mom}=\frac{1}{|\mathcal C_B|d_f}
\sum_{c\in\mathcal C_B}\left[\norm{\hat\mu_c-\mu_c}^2
+\lambda_{\mathrm{var}}\norm{\hat v_c-v_c}^2\right].
\label{eq:mom}
\end{equation}
Here, $\mathcal C_B$ is the set of valid condition groups in the minibatch, and $\lambda_{\mathrm{var}}$ balances mean and variance matching. We apply condition-moment matching on all three datasets; on BBBC021, $\lambda_{\mathrm{var}}=0.5$.

\noindent\textbf{Biological contrast.}
We compute probabilities $q_{ij}$ by applying a softmax with temperature $\tau$ to cosine similarities between predicted-image features and condition prototypes from real perturbed training data. Let $\mathcal P_i$ contain the positive prototypes for sample $i$, defined by the requested condition and available biological annotations; the remaining prototypes are negatives. Following the multi-positive supervised contrastive reduction~\citep{khosla2020supcon},
\begin{equation}
\ell_i^{\rm con}=-\frac{1}{|\mathcal P_i|}
\sum_{j\in\mathcal P_i}\log q_{ij}.
\label{eq:moaloss}
\end{equation}
The batch loss $\mathcal L_{\rm con}$ is a time-gated weighted average of these per-image losses. The loss has the same form across datasets, while the positive and negative sets depend on their condition definitions and annotations. On BBBC021, we use 26 condition prototypes and $\tau=0.1$, with positives sharing the requested MoA. This encourages shared MoA-associated features, while moment matching retains condition-specific constraints.

\noindent\textbf{Joint training objective.}
Across all three datasets, we combine bridge regression with moment matching and biological contrast:
\begin{equation}
\mathcal L_{\rm total}=\mathcal L_{\rm bridge}
+\lambda_{\rm mom}\mathcal L_{\rm mom}
+\lambda_{\rm con}\mathcal L_{\rm con}.
\label{eq:total}
\end{equation}
Biological response losses use time-dependent gating and a gradual weight schedule during training. 
App.~\ref{app:implementation} details the implementation and weight settings.
App.~\ref{app:proofs} decomposes the prototype loss.

\begin{table}[!t]
\centering\small
\caption{BBBC021 MoA with CellProfiler features and DINOv2-L/14 features. Bridge-only omits biological response losses. All scores are percentages. 
Real perturbed data provide a reference.
The CellProfiler-based classifier is trained on real perturbed data and tested on 5,120 generated images.}
\label{tab:cp40}
\setlength{\tabcolsep}{3pt}
\begin{tabular*}{\linewidth}{@{\extracolsep{\fill}}l*{6}{r}@{}}
\toprule
& \multicolumn{3}{c}{\textbf{CellProfiler features}} & \multicolumn{3}{c}{\textbf{DINOv2-L/14}}\\
\cmidrule(lr){2-4}\cmidrule(l){5-7}
Method & Acc.$\uparrow$ & Macro-F1$\uparrow$ & W.-F1$\uparrow$ & Acc.$\uparrow$ & Macro-F1$\uparrow$ & W.-F1$\uparrow$\\
\midrule
IMPA~\citep{palma2025impa} & 57.9 & 53.9 & 60.7 & 57.8 & 36.2 & 60.6\\
CellFlux~\citep{zhang2025cellflux} & 72.2 & 65.8 & 73.2 & 67.4 & 43.4 & 66.6\\
CellFlux+bio & \second{73.3} & 66.6 & 74.1 & 69.8 & 45.4 & 69.1\\ 
Bridge-only & 73.1 & \second{67.6} & \second{74.9} & \second{70.6} & \second{45.8} & \second{70.1}\\
PerturbBridge & \best{77.8} & \best{73.0} & \best{79.3} & \best{74.9} & \best{46.9} & \best{74.3}\\
\cmidrule(l){2-7}
Real Perturbed Data & \textcolor{blue}{80.1} & \textcolor{blue}{76.3} & \textcolor{blue}{81.8} & \textcolor{blue}{78.6} & \textcolor{blue}{57.6} & \textcolor{blue}{79.7}\\
\bottomrule
\end{tabular*}
\vspace{-3mm}
\end{table}

\noindent\textbf{Bridge sampling.}
We use an ECSI-based bridge sampler \citep{zhang2025ecsi}, followed by deterministic DBIM steps \citep{zheng2025dbim}. Each step combines the predicted perturbed image with the control image and analytic path coefficients. Sampling requires no biological evaluation. App.~\ref{app:sampler} provides the update equations and implementation details.

\section{Experiments}
\label{sec:experiments}
\subsection{Experimental setups}
\label{sec:experimental-setup}
\noindent\textbf{Datasets and backbone.}
We evaluate PerturbBridge on three microscopy benchmarks covering perturbation-induced cellular phenotypes and subcellular organization.
BBBC021~\citep{caie2010bbbc021} contains compound-treated cells with mechanism-of-action (MoA) annotations, enabling evaluation of whether generated images preserve features associated with the corresponding biological mechanism.
JUMP~\citep{chandrasekaran2023jump} provides perturbation-labeled images across experimental plates, supporting evaluation of perturbation identity through cross-plate retrieval.
AllenCell~\citep{allen_cell_explorer_drug_perturbation_pilot} captures cellular structures under compound treatments, allowing assessment of changes in subcellular organization and compound retrieval within each imaging target.
Our standard BBBC021 comparison uses 63,781 perturbed training images and 5,120 matched test images.
We adopt the 56.2M-parameter backbone and data preprocessing of CellFlux~\citep{zhang2025cellflux}, and set the bridge noise scale to $\gamma_{\max}=0.1$ on BBBC021 and JUMP.
PerturbBridge is trained for 150, 140, and 240 epochs on BBBC021, JUMP, and AllenCell, respectively.
We compare against baselines with available implementations and additionally report published FID/KID scores for SimuSDB~\citep{zhang2026simusdb} (code not released).


\noindent\textbf{Metrics.}
We assess biological responses through MoA classification on BBBC021 and perturbation retrieval on JUMP and AllenCell, alongside image-distribution quality. Distributional scores alone do not establish response correctness, as illustrated theoretically for pooled and conditional FID in App.~\ref{sec:fid-analysis}. MoA evaluation uses DINOv2~\citep{oquab2024dinov2} and CellProfiler~\citep{carpenter2006cellprofiler} features, while retrieval uses DINOv2 features; none is used for biological response supervision, which relies on Inception-v3. We report classification accuracy, macro-F1, and weighted-F1; retrieval Hit@$k$, mean reciprocal rank (MRR), and mean average precision (mAP); median rank on JUMP; and mean rank on AllenCell. We report Fr\'echet and Kernel Distances in Inception-v3 (FID/KID)~\citep{heusel2017fid,bińkowski2018demystifying} and DINOv2 spaces (FDD/KDD)~\citep{stein2023exposing}. Inception-based MoA scores and FID/KID follow the CellFlux protocol. See App.~\ref{app:metrics}.


\begin{figure}[t]
\centering
\includegraphics[width=\linewidth]{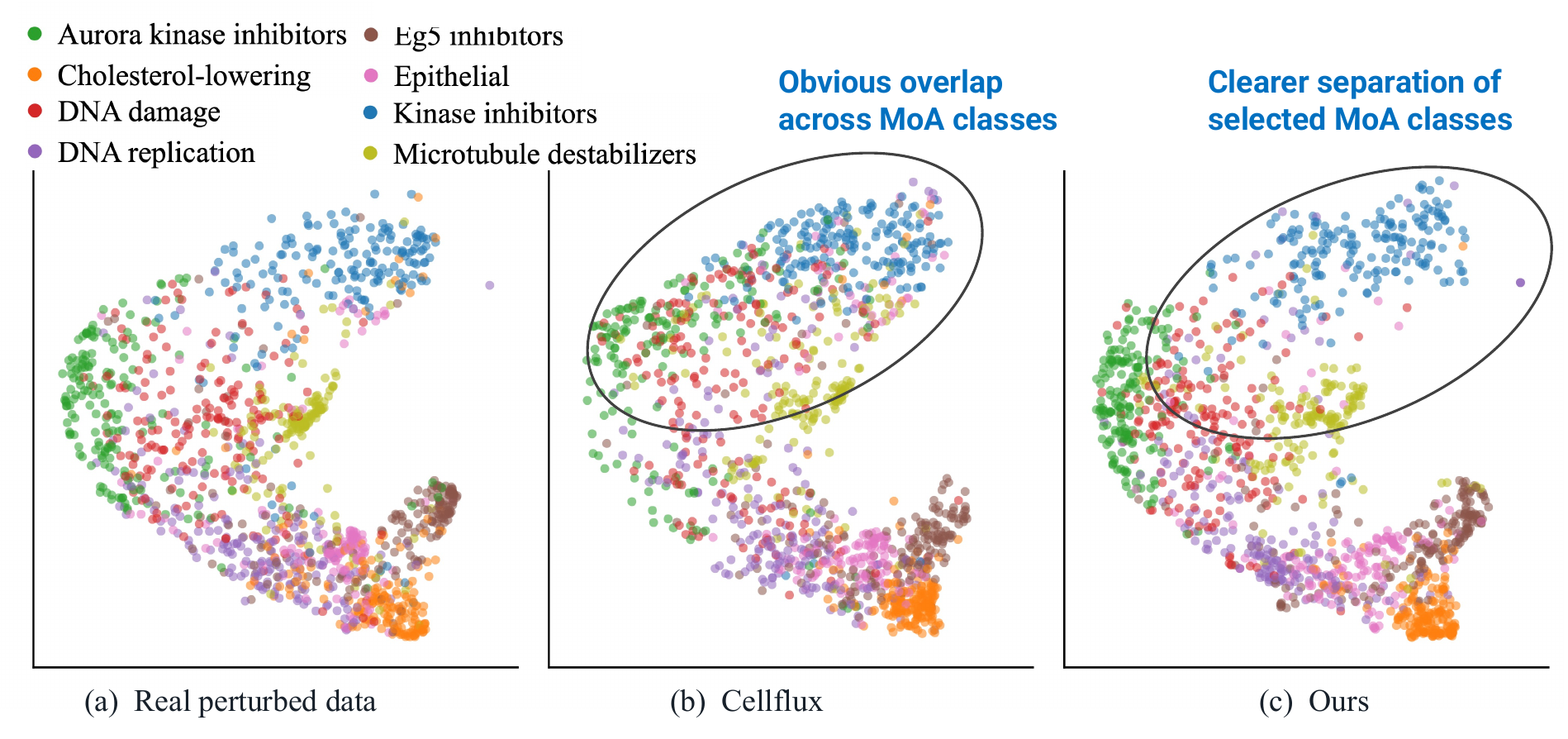}
\vspace{-4mm}
\caption{\textbf{MoA class separation on BBBC021.} (a) Real perturbed data; (b) CellFlux; (c) PerturbBridge (ours). Each point represents a cell image, colored by its MoA label. All panels use a shared UMAP projection of the 512-dimensional hidden features from a frozen MoA classifier. Ellipses highlight overlap across MoA classes in CellFlux and clearer separation in PerturbBridge.}
\label{fig:moa-umap}
\vspace{-4mm}
\end{figure} 

\begin{table}[t]
\centering\scriptsize
\caption{Perturbation retrieval with DINOv2 features. JUMP uses single-cell Mahalanobis retrieval across plates. AllenCell uses condition profiles to retrieve compounds within the same target from profiles of real perturbed training data. Bridge-only omits biological response losses and corresponds to the ECSI-only baseline on AllenCell. Hit rates are percentages; bold and underlined values mark the best and second-best generated-image results within each panel.}
\label{tab:retrieval}
\footnotesize
\setlength{\tabcolsep}{2pt}
\begin{tabular*}{\linewidth}{@{\extracolsep{\fill}}l*{10}{r}@{}}
\toprule
& \multicolumn{6}{c}{\textbf{JUMP}} & \multicolumn{4}{c}{\textbf{AllenCell}}\\
\cmidrule(lr){2-7}\cmidrule(l){8-11}
\midrule
Method
& \shortstack{Hit\\@1$\uparrow$}
& \shortstack{Hit\\@5$\uparrow$}
& \shortstack{Hit\\@10$\uparrow$}
& \shortstack{MRR\\$\times100\uparrow$}
& \shortstack{mAP\\$\times100\uparrow$}
& \shortstack{Median\\rank$\downarrow$}
& \shortstack{Hit\\@1$\uparrow$}
& MRR$\uparrow$
& mAP$\uparrow$
& \shortstack{Mean\\rank$\downarrow$}\\
IMPA~\citep{palma2025impa} & 1.50 & 5.41 & 9.52 & 4.65 & 0.88 & 84 & \multicolumn{1}{|r}{52.14} & 0.73 & 0.73 & 1.64\\
CellFlux~\citep{zhang2025cellflux} & 1.56 & 5.78 & 10.38 & 4.92 & \second{0.92} & 82 & \multicolumn{1}{|r}{\second{62.14}} & 0.78 & 0.78 & 1.58\\
Bridge-only & \second{1.75} & \second{5.98} & \second{10.42} & \second{5.08} & \second{0.92} & \second{79} & \multicolumn{1}{|r}{\second{62.14}} & \second{0.79} & \second{0.79} & \second{1.54}\\
PerturbBridge & \best{2.04} & \best{7.27} & \best{12.15} & \best{5.94} & \best{0.97} & \best{74} & \multicolumn{1}{|r}{\best{69.29}} & \best{0.82} & \best{0.82} & \best{1.48}\\
\midrule
Real Perturbed Data & \textcolor{blue}{2.93} & \textcolor{blue}{7.94} & \textcolor{blue}{12.14} & \textcolor{blue}{6.64} & \textcolor{blue}{1.10} & \textcolor{blue}{77} & \multicolumn{1}{|r}{\textcolor{blue}{76.19}} & \textcolor{blue}{0.87} & \textcolor{blue}{0.87} & \textcolor{blue}{1.29}\\
\bottomrule
\end{tabular*}
\vspace{-4mm}
\end{table}

\subsection{Biological-level evaluation}

\noindent\textbf{PerturbBridge generates images that better preserve MoA-associated phenotypes.}
Table~\ref{tab:cp40} shows improved MoA classification with PerturbBridge on BBBC021 using CellProfiler features and DINOv2-L/14. Relative to CellFlux, accuracy increases from 72.2\% to 77.8\% and from 67.4\% to 74.9\%, corresponding to gains of 5.7 and 7.5 percentage points, respectively. With CellProfiler features, macro-F1 increases from 65.8\% to 73.0\%, and weighted-F1 from 73.2\% to 79.3\%; both DINOv2 F1 scores also improve. These gains hold under both equal class weighting and weighting by class frequency. The two representations capture complementary image properties: CellProfiler measures morphology through features such as shape, intensity, and texture, while DINOv2 provides learned visual features. Neither is used for biological response supervision, which relies on Inception-v3 features. Their consistent improvements therefore support better preservation of MoA-associated phenotypes beyond the supervising representation.

\noindent\textbf{PerturbBridge shows clearer separation of selected MoA classes in feature space.}
The shared UMAP projection in Fig.~\ref{fig:moa-umap} provides qualitative evidence by visualizing the 512-dimensional hidden features of a frozen MoA classifier. Each point represents an image colored by its MoA label, and the projection allows comparison of class organization across real perturbed data, CellFlux, and PerturbBridge. In the highlighted regions, MoA classes form more distinct groups with less overlap in PerturbBridge samples than in CellFlux samples. This suggests better preservation of features that distinguish biological responses, consistent with the improved MoA classification scores.

\begin{figure}[t]
\centering
\includegraphics[width=\linewidth]{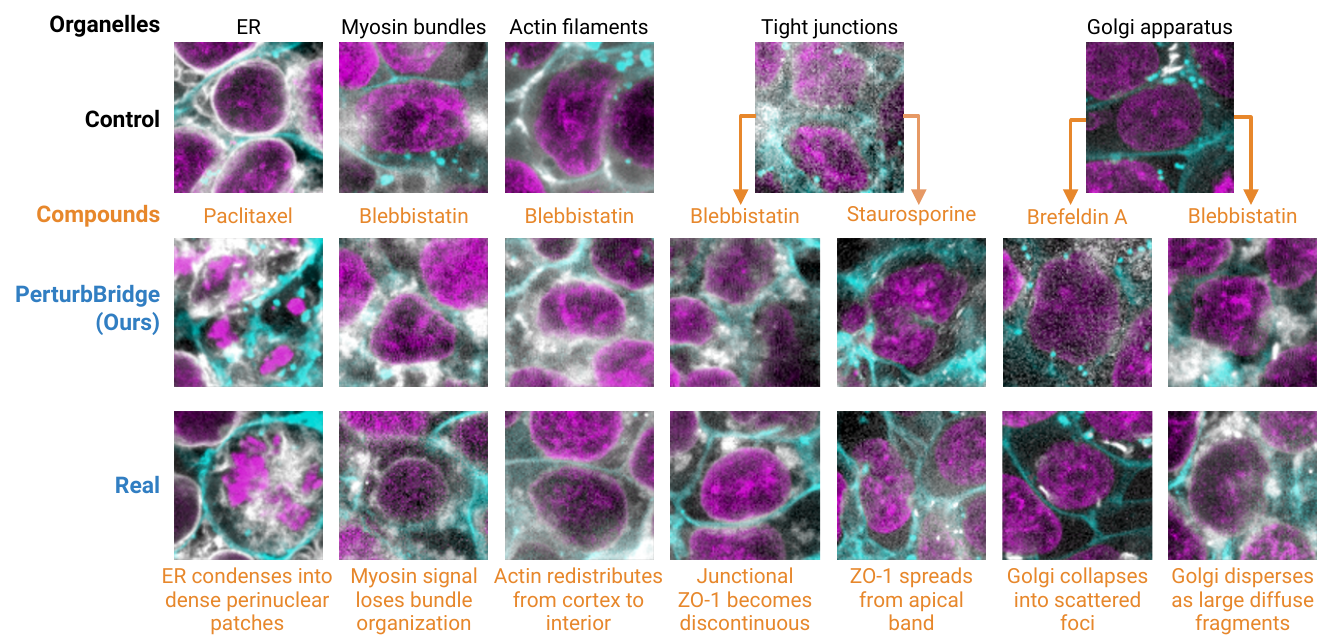}
\vspace{-6mm}
\caption{\textbf{Comparison of perturbation-induced changes on AllenCell.} Columns show seven compound--structure combinations involving the endoplasmic reticulum (ER), myosin bundles, actin filaments, tight junctions, and Golgi apparatus. Control images appear at the top, PerturbBridge samples in the middle (Ours), and real perturbed images at the bottom (Real). Cell nuclei are shown in \textcolor[RGB]{202,116,199}{purple} and plasma membranes in \textcolor[RGB]{165,218,245}{blue}. SNB denotes (S)-nitro-blebbistatin; annotations highlight subcellular changes. Examples are shown in App.~\ref{app:qualitative}.}
\label{fig:generated-real}
\vspace{-6mm}
\end{figure}

\begin{table}[t]
\centering\small
\caption{BBBC021 MoA with Inception-v3 features on the standard and unseen-compound splits. Classification scores are percentages; bold and underlined values mark the best and second-best generated-image results within each split. Dashes indicate unavailable results. Bridge-only omits biological response losses. CellFlux+bio applies matched biological response losses to the predicted perturbed image reconstructed from velocity outputs. Gray rows denote ablation variants.}
\label{tab:moa}
\label{tab:ood}
\setlength{\tabcolsep}{2.5pt}
\begin{tabularx}{\linewidth}{@{}l*{6}{>{\raggedleft\arraybackslash}X}@{}}
\toprule
& \multicolumn{3}{c}{\textbf{Standard split}} & \multicolumn{3}{c}{\textbf{Unseen compounds}}\\
\cmidrule(lr){2-4}\cmidrule(l){5-7}
Method & Acc.$\uparrow$ & Macro-F1$\uparrow$ & W.-F1$\uparrow$ & Acc.$\uparrow$ & Macro-F1$\uparrow$ & W.-F1$\uparrow$\\
\midrule
PhenDiff~\citep{bourou2024phendiff} & 52.6 & 33.6 & 52.1 & 9.6 & 9.3 & 7.4\\
IMPA~\citep{palma2025impa} & 63.7 & 40.2 & 64.8 & 16.0 & 10.0 & 13.1\\
CellFlux~\citep{zhang2025cellflux} & 64.9 & 42.1 & 63.8 & 43.2 & 36.6 & 42.8\\
CellFluxRL~\citep{wu2026cellfluxrl} & 66.2 & \second{42.9} & \second{67.9} & -- & -- & --\\
\rowcolor{black!8}
CellFlux+bio & 66.3 & 42.6 & 65.1 & 44.7 & \second{39.5} & 45.2\\
\rowcolor{black!8}
Bridge-only & \second{67.7} & 42.2 & 67.5 & \second{50.7} & 37.4 & \second{46.1}\\
PerturbBridge & \best{72.4} & \best{46.9} & \best{72.0} & \best{59.0} & \best{47.3} & \best{55.2}\\
\midrule
Real Perturbed Data & \textcolor{blue}{73.6} & \textcolor{blue}{69.7} & \textcolor{blue}{72.1} & \textcolor{blue}{88.0} & \textcolor{blue}{85.0} & \textcolor{blue}{88.0}\\
\bottomrule
\end{tabularx}
\vspace{-3mm}
\end{table}

\noindent\textbf{PerturbBridge generates images with more identifiable perturbation-specific features.}
Table~\ref{tab:retrieval} shows improved DINOv2-based retrieval relative to CellFlux, using single-cell Mahalanobis retrieval across plates on JUMP and condition-profile retrieval within the same target on AllenCell. On JUMP, Hit@10 increases from 10.38\% for CellFlux to 12.15\% for PerturbBridge, while median rank decreases from 82 to 74. On AllenCell, Hit@1 increases from 62.14\% to 69.29\%. Further details of the retrieval protocols are provided in App.~\ref{app:retrieval}.

\noindent\textbf{PerturbBridge captures characteristic perturbation-induced morphological changes.}
Fig.~\ref{fig:generated-real} compares control images, PerturbBridge samples, and real perturbed images on the AllenCell drug perturbation dataset. The selected examples show changes in ER organization, myosin bundles, actin distribution, tight junctions, and Golgi morphology, each corresponding to a phenotype reported for the same compound and cell line. PerturbBridge samples exhibit patterns resembling those in real perturbed images, and the resemblance is compound-specific rather than a generic treated-cell appearance: the Golgi breaks into discrete foci under brefeldin A but into larger, diffuse fragments under SNB, and both patterns are reproduced. This provides qualitative support for the preservation of perturbation-associated phenotypes.


\begin{table}[t]
\centering\footnotesize
\caption{Image-distribution results in Inception-v3 (FID, KID) and DINOv2 (FDD, KDD) feature spaces. Lower is better. KID is multiplied by 100; KDD is unscaled. Bold and underlined values mark the best and second-best results within each dataset and metric. Dashes denote unreported values. CellFluxRL and SimuSDB FID/KID values are from \citet{wu2026cellfluxrl} and \citet{zhang2026simusdb}, respectively. Gray rows denote ablation variants.}
\label{tab:distribution}
\setlength{\tabcolsep}{0.5pt}
\renewcommand{\arraystretch}{1.12}
\begin{tabularx}{\linewidth}{@{}l*{12}{>{\raggedleft\arraybackslash}X}@{}}
\toprule
& \multicolumn{4}{c}{\textbf{BBBC021}} & \multicolumn{4}{c}{\textbf{JUMP}} & \multicolumn{4}{c}{\textbf{AllenCell}}\\
\cmidrule(lr){2-5}\cmidrule(lr){6-9}\cmidrule(lr){10-13}
Method & FID$\downarrow$ & KID$\downarrow$ & FDD$\downarrow$ & KDD$\downarrow$ & FID$\downarrow$ & KID$\downarrow$ & FDD$\downarrow$ & KDD$\downarrow$ & FID$\downarrow$ & KID$\downarrow$ & FDD$\downarrow$ & KDD$\downarrow$\\
\midrule
PhenDiff~\citep{bourou2024phendiff} & 49.5 & 3.10 & 250.1 & 1.55 & 49.3 & 5.10 & 247.3 & 2.76 & 48.8 & 2.75 & 192.6 & 1.56\\
IMPA~\citep{palma2025impa} & 33.7 & 2.60 & 293.9 & 2.57 & 14.6 & 1.10 & 371.1 & 4.43 & 37.6 & 2.25 & 424.1 & 4.98\\
CellFlux~\citep{zhang2025cellflux} & 18.8 & 1.61 & 207.2 & 1.73 & 9.0 & 0.63 & \second{123.1} & \second{1.35} & 28.4 & 1.46 & 181.2 & 1.71\\
CellFluxRL~\citep{wu2026cellfluxrl} & 24.0 & 1.40 & -- & -- & -- & -- & -- & -- & -- & -- & -- & --\\
SimuSDB~\citep{zhang2026simusdb} & \second{16.5} & 1.32 & -- & -- & 8.7 & 0.58 & -- & -- & -- & -- & -- & --\\
\rowcolor{black!8}
CellFlux+bio & 18.4 & 1.59 & 206.0 & 1.75 & 9.8 & 0.70 & 130.5 & 1.46 & \best{25.9} & \second{0.99} & \second{162.0} & \second{1.34}\\
\rowcolor{black!8}
Bridge-only & 17.3 & \second{1.02} & \best{165.7} & \best{1.21} & \best{8.2} & \second{0.47} & 139.1 & 1.52 & 27.0 & 1.27 & 171.0 & 1.58\\
PerturbBridge & \best{16.4} & \best{0.91} & \second{173.8} & \second{1.30} & \second{8.3} & \best{0.38} & \best{108.8} & \best{1.04} & \second{26.7} & \best{0.97} & \best{146.3} & \best{1.08}\\
\bottomrule
\end{tabularx}
\vspace{-3mm}
\end{table}

\begin{figure}[t]
\centering
\begin{minipage}[c]{0.30\linewidth}
\caption{Sampling-budget comparison on BBBC021 and JUMP. Lower FID is better. Gray bars: CellFlux; blue bars: PerturbBridge. Bar labels report FID; horizontal axes indicate model evaluations (NFE).}
\label{fig:sampling-nfe}
\end{minipage}\hfill
\begin{minipage}[c]{0.68\linewidth}
\includegraphics[width=\linewidth]{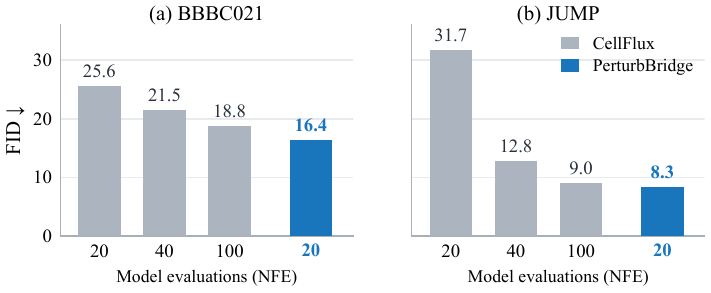}
\end{minipage}
\vspace{-6mm}
\end{figure}

\noindent\textbf{PerturbBridge improves MoA classification on standard and unseen-compound splits.}
Table~\ref{tab:moa} compares MoA classification results on BBBC021 using the released CellFlux classifier with Inception-v3 features. For comparability, we follow the standard and unseen-compound experimental settings of CellFlux \citep{zhang2025cellflux}, including its data splits, preprocessing, and evaluation protocol. Relative to CellFlux, PerturbBridge improves accuracy from 64.9\% to 72.4\% on the standard split and from 43.2\% to 59.0\% on unseen compounds. On the standard split, macro-F1 increases from 42.1\% to 46.9\%, and weighted-F1 from 63.8\% to 72.0\%. On unseen compounds, the corresponding scores increase from 36.6\% to 47.3\% and from 42.8\% to 55.2\%. These improvements show that the MoA classification gains also hold under the CellFlux evaluation protocol and extend to compounds excluded from training.

\noindent\textbf{PerturbBridge improves image-distribution fidelity alongside biological task performance.}
Table~\ref{tab:distribution} compares image-distribution quality across all three datasets in Inception-v3 and DINOv2 feature spaces. Relative to CellFlux, PerturbBridge reduces FID from 18.8 to 16.4 on BBBC021, from 9.0 to 8.3 on JUMP, and from 28.4 to 26.7 on AllenCell. KID also decreases on all three datasets. The improvements extend to DINOv2 features: FDD decreases from 207.2 to 173.8, from 123.1 to 108.8, and from 181.2 to 146.3, respectively, with lower KDD on each dataset as well. On unseen BBBC021 compounds, FID decreases from 42.0 to 29.5. Together, these results support improved distributional agreement with real perturbed data across datasets and feature representations.

\subsection{Ablation studies}
\noindent\textbf{Biological response supervision improves MoA preservation.}
Tables~\ref{tab:moa} and~\ref{tab:distribution} compare PerturbBridge with the Bridge-only variant, which omits biological response supervision. On BBBC021, adding this supervision increases MoA accuracy from 67.7\% to 72.4\% on the standard split and from 50.7\% to 59.0\% on unseen compounds, with improvements in both macro-F1 and weighted-F1. On the standard split, FID decreases from 17.3 to 16.4 and KID from 1.02 to 0.91. These results show that biological response supervision improves MoA preservation beyond bridge regression alone, including for compounds excluded from training.

\noindent\textbf{PerturbBridge improves most evaluated metrics over CellFlux+bio.}
Tables~\ref{tab:moa} and~\ref{tab:distribution} compare PerturbBridge with CellFlux+bio, which applies biological response supervision to perturbed images reconstructed from CellFlux velocity predictions. On BBBC021, CellFlux+bio reaches 66.3\% MoA accuracy, compared with 72.4\% for PerturbBridge. On JUMP, its FID is 9.8, compared with 8.3 for PerturbBridge. Together with the Bridge-only comparison, these results support the combined design of direct perturbed-image prediction and biological response supervision.

\noindent\textbf{PerturbBridge achieves lower FID with fewer model evaluations.}
Fig.~\ref{fig:sampling-nfe} compares CellFlux at 20, 40, and 100 model evaluations (NFE) with PerturbBridge at 20 NFE. At the matched budget of 20 NFE, PerturbBridge achieves FID values of 16.4 on BBBC021 and 8.3 on JUMP, compared with 25.6 and 31.7 for CellFlux. These scores also improve on CellFlux at 100 NFE, which reaches 18.8 and 9.0, respectively. Both methods use the same logical NFE convention; with classifier-free guidance, each logical evaluation uses two network forwards (App.~\ref{app:sampler}).


\section{Conclusion}
We introduce PerturbBridge, a stochastic data-to-data bridge model that directly supervises phenotypic responses in predicted perturbed images during generative training. By jointly optimizing bridge regression and differentiable biological response losses through a frozen encoder, PerturbBridge incorporates biological objectives without requiring RL post-training or biological evaluation during sampling. Across BBBC021, JUMP, and AllenCell, the model improves MoA classification, perturbation retrieval, and image-distribution metrics relative to CellFlux. Gains under CellProfiler and DINOv2 representations extend beyond the features used for supervision. Ablations further demonstrate the contribution of biological response supervision. With a bridge-specific sampler, PerturbBridge achieves lower FID than the 100-NFE CellFlux baseline using 20 model evaluations. These findings support direct biological response supervision as a practical approach to cellular perturbation generation and highlight the value of evaluating generated images through downstream biological tasks alongside distributional metrics.



\bibliographystyle{iclr2027_conference}
\bibliography{references}

\clearpage
\appendix
\onecolumn
\section{Mathematical support for the motivation and method}
\label{app:proofs}
\label{sec:theory}

\subsection{FID ranking reversal: statement and proof}
\label{sec:motivation}
\label{sec:fid-analysis}
Shared appearance and experimental context can dominate image features, while perturbation-specific changes may be subtle \citep{chandrasekaran2024matched,moshkov2024profiling}. Fig.~\ref{fig:fid-biology} illustrates how matching non-target appearance can coexist with an incorrect phenotype. The construction below shows that both pooled and conditional FID can favor biologically incorrect outputs. This concerns what the metrics certify, rather than whether ideal conditional generative learning can recover the target distribution.

\begin{figure}[htbp]
\centering
\includegraphics[width=\linewidth]{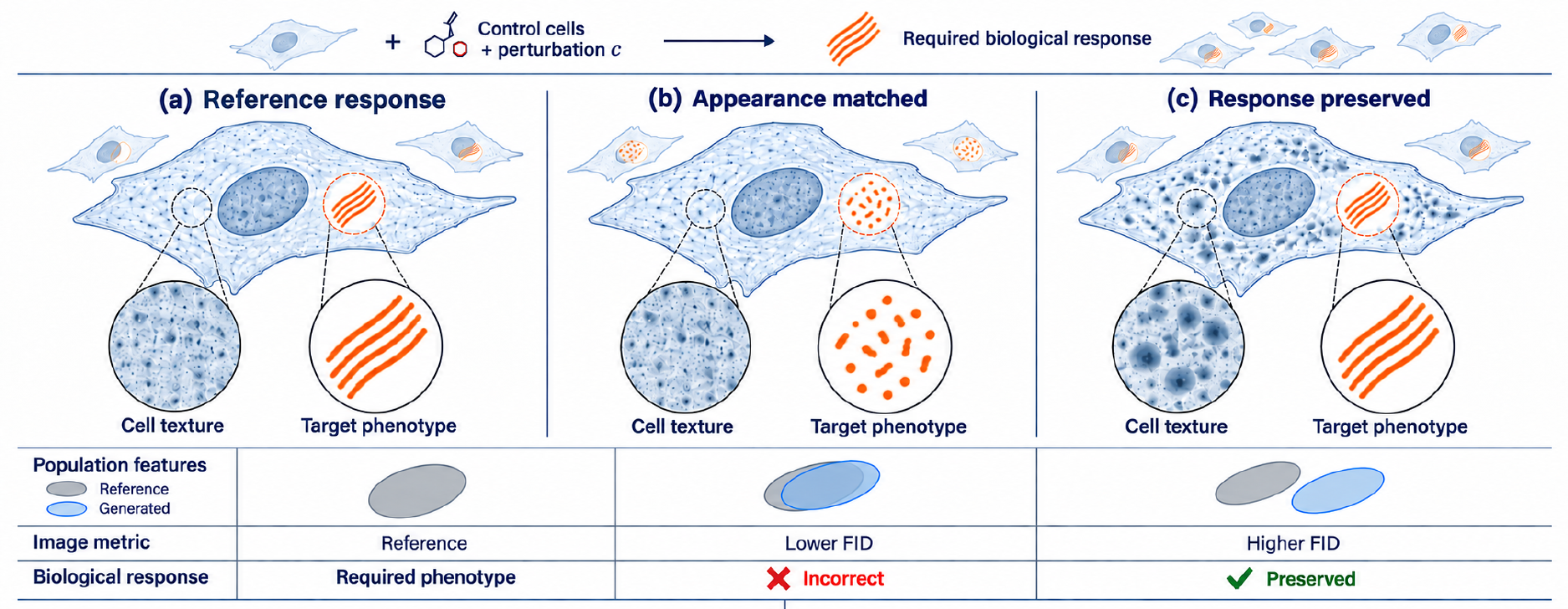}
\caption{\textbf{Image fidelity and biological correctness.} Orange fibers depict the target phenotype; blue-gray texture depicts non-target intracellular appearance. Paired enlargements isolate both features. Relative to (a), (b) matches non-target appearance but disrupts the phenotype, whereas (c) preserves the phenotype despite a texture shift. Population-feature ellipses schematize the resulting disagreement between FID and biological correctness; the FID ordering follows Proposition~\ref{prop:fid-risk}. Cells and ellipses are conceptual illustrations.}
\label{fig:fid-biology}
\end{figure}

Let $P$ denote the real perturbed distribution, with equally likely conditions $C\in\{-1,+1\}$. Let $U\sim\mathcal N(0,I_d)$ be independent of $C$, $\delta>0$, and $a\in\mathbb R^d\setminus\{0\}$. The final feature coordinate determines the biological label through $h(x)=\operatorname{sign}(x_{d+1})$. Compare
\begin{equation}
X=(U,\delta C),\qquad
Q^{\rm keep}:\ \hat X=(U+a,\delta C),\qquad
Q^{\rm flip}:\ \hat X=(U,-\delta C).
\label{eq:toy-data}
\end{equation}
The first model preserves the requested label but shifts unrelated appearance; the second reverses the label while preserving the pooled image distribution.

\begin{proposition}[FID and biological utility can rank models oppositely]
\label{prop:fid-risk}
Using identity features, and defining conditional FID as the average FID within each condition,
\begin{align}
\operatorname{FID}_{\rm id}(P_X,Q_X^{\rm flip})&=0,
&\operatorname{FID}_{\rm id}(P_X,Q_X^{\rm keep})&=\norm a^2,\\
\operatorname{cFID}_{\rm id}(P,Q^{\rm flip})&=4\delta^2,
&\operatorname{cFID}_{\rm id}(P,Q^{\rm keep})&=\norm a^2.
\label{eq:fid-risk-reversal}
\end{align}
If $0<4\delta^2<\norm a^2$, both metrics prefer $Q^{\rm flip}$, although its predicted biological labels are always wrong and those of $Q^{\rm keep}$ are always correct.
\end{proposition}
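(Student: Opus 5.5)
The plan is to compute each quantity with the closed-form Fréchet distance between two distributions with finite first and second moments,
\[
\operatorname{FID}(P,Q)=\norm{\mu_P-\mu_Q}^2+\operatorname{tr}\bigl(\Sigma_P+\Sigma_Q-2(\Sigma_P^{1/2}\Sigma_Q\Sigma_P^{1/2})^{1/2}\bigr).
\]
Under identity features this depends only on means and covariances. The one delicate point is degenerate covariances, since within a condition the last coordinate is constant. The formula remains valid for positive semidefinite covariances, and in every case below the two covariances coincide, so the trace term vanishes. For pooled FID I will first note that $Q_X^{\rm flip}$ equals $P_X$ in law: $-\delta C$ has the same distribution as $\delta C$ because $C$ is symmetric, and $U$ is independent of $C$. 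Hence $\operatorname{FID}_{\rm id}(P_X,Q_X^{\rm flip})=0$.

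For $Q^{\rm keep}$, the pooled laws are translates of each other by $(a,0)$. They therefore have equal covariance, namely $I_d\oplus\delta^2$, because $\operatorname{Var}(\delta C)=\delta^2$ and the two blocks are uncorrelated by independence. Their means differ by $(a,0)$, so the FID is $\norm a^2$.

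For conditional FID, I will fix $C=c$. Under $P$ the conditional law is $\mathcal N(0,I_d)\otimes\delta_{\delta c}$. Under $Q^{\rm flip}$ it is $\mathcal N(0,I_d)\otimes\delta_{-\delta c}$. Both covariances equal $I_d\oplus 0$, and the means differ only in the last coordinate, by $2\delta c$. Each within-condition FID is therefore $4\delta^2$, and averaging over the two conditions gives $4\delta^2$. Under $Q^{\rm keep}$ the conditional law is again a translate by $(a,0)$, so each term is $\norm a^2$ and so is the average.

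The final claim follows by comparison. If $4\delta^2<\norm a^2$, both metrics are strictly smaller for $Q^{\rm flip}$: for pooled FID we have $0<\norm a^2$ because $a\neq0$, and for conditional FID we have $4\delta^2<\norm a^2$. For the labels, $h(\hat X)=\operatorname{sign}(-\delta C)=-C$ under flip, since $\delta>0$, so the predicted label is always wrong. Under keep, $h(\hat X)=\operatorname{sign}(\delta C)=C$, so it is always correct.

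The only step needing care is justifying the Fréchet formula for singular Gaussian-times-point-mass laws. I would handle it by observing that the formula is the 2-Wasserstein distance between Gaussians with those moments, which is well defined for positive semidefinite covariances. In any case, with equal covariances it reduces to the squared mean difference.
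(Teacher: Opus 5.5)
Your proposal is correct and follows the paper's proof essentially step by step. Both arguments reduce FID to the squared mean difference, because the compared laws share the covariance $\operatorname{diag}(I_d,\delta^2)$ in the pooled case and $\operatorname{diag}(I_d,0)$ within each condition, and both read off label correctness from the sign of the final coordinate; your additional remarks (that $Q_X^{\rm flip}$ equals $P_X$ in law, and that the trace term vanishes for equal positive semidefinite covariances even when they are singular) make explicit what the paper leaves implicit.
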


\noindent\textbf{Proof.}
The pooled means are zero for $P$ and $Q^{\rm flip}$ and $(a,0)$ for $Q^{\rm keep}$, with common covariance $\operatorname{diag}(I_d,\delta^2)$. The covariance contribution to FID therefore vanishes, leaving squared mean differences of zero and $\norm a^2$. Conditional on $C=c$, the means are $(0,c\delta)$, $(a,c\delta)$, and $(0,-c\delta)$, with common covariance $\operatorname{diag}(I_d,0)$, giving conditional FIDs $\norm a^2$ and $4\delta^2$. The signs of the final coordinates establish label correctness. This is a feature-space illustration, not an empirical result on the cellular datasets.

\subsection{Small image errors can carry large biological consequences}
\label{sec:taskmetric}
In Equation~\eqref{eq:toy-data}, removing the phenotype coordinate leaves normalized squared error $\delta^2/(d+1)$ but removes all condition information. A feature-sensitive loss can assign weight to this low-energy coordinate.

For image error $e=\hat x_0-x_0$, a differentiable encoder $\phi$ with Jacobian $J_\phi$ at $x_0$, and $\lambda\geq0$, the local expansion is
\begin{equation}
\norm e^2+\lambda\norm{\phi(x_0+e)-\phi(x_0)}^2
=e^\top\bigl(I+\lambda J_\phi^\top J_\phi\bigr)e+o(\norm e^2).
\label{eq:metric}
\end{equation}
This follows by substituting the first-order expansion of $\phi$ and squaring. The added term weights directions captured by the encoder. This local illustration explains the role of feature-sensitive supervision; the implemented losses use condition moments and prototypes rather than paired feature regression.

\subsection{The implemented loss explicitly constrains label association}
For a predicted-image feature $Z$ and requested MoA label $A$, let $\mathcal P_A$ be the positive prototypes, $K_A=|\mathcal P_A|$, $q_A(Z)=\sum_{j\in\mathcal P_A}q_j(Z)$, and $r_j(Z)=q_j(Z)/q_A(Z)$ within this set.
\begin{proposition}[Biological classification inside prototype contrast]
\label{prop:contrast}
For strictly positive prototype probabilities, the multi-positive loss satisfies
\begin{equation}
\ell(Z,A)=-\frac{1}{K_A}\sum_{j\in\mathcal P_A}\log q_j(Z)
=-\log q_A(Z)+\log K_A+\KL(U_A\Vert r_A(Z)),
\label{eq:decomp}
\end{equation}
where $U_A$ is uniform on the positive set.
\end{proposition}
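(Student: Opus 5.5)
The plan is a direct algebraic identity: write each positive probability as the product of the aggregated positive mass and its within-positive share, then average the logarithms. Since every $q_j(Z)>0$, the aggregate $q_A(Z)=\sum_{j\in\mathcal P_A}q_j(Z)$ is strictly positive. Hence $r_j(Z)=q_j(Z)/q_A(Z)$ is well defined and strictly positive, and $r_A(Z)=(r_j(Z))_{j\in\mathcal P_A}$ is a probability vector on $\mathcal P_A$. For each $j\in\mathcal P_A$ the factorization
\begin{equation*}
\log q_j(Z)=\log q_A(Z)+\log r_j(Z)
\end{equation*}
holds, and all logarithms are finite.

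Next I will substitute this factorization into the multi-positive loss of Equation~\eqref{eq:moaloss}, written for a single feature $Z$ with positive set $\mathcal P_A$. Averaging over the $K_A$ positives gives
\begin{equation*}
\ell(Z,A)=-\log q_A(Z)-\frac{1}{K_A}\sum_{j\in\mathcal P_A}\log r_j(Z).
\end{equation*}
The only remaining task is to identify the second term. By definition of the Kullback--Leibler divergence with $U_A(j)=1/K_A$ on $\mathcal P_A$,
\begin{equation*}
\KL(U_A\Vert r_A(Z))=\sum_{j\in\mathcal P_A}\frac{1}{K_A}\log\frac{1/K_A}{r_j(Z)}=-\log K_A-\frac{1}{K_A}\sum_{j\in\mathcal P_A}\log r_j(Z).
\end{equation*}
Solving for the averaged log term and substituting yields Equation~\eqref{eq:decomp}.

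There is no real obstacle. The only point needing care is well-definedness: strict positivity guarantees finite logs and a valid divergence, because $r_A(Z)$ has full support on the support of $U_A$. I would close with the interpretation suggested by the proposition's title. The term $-\log q_A(Z)$ is the cross-entropy of a classifier that pools prototype probabilities by MoA, so it explicitly constrains label association. The term $\log K_A$ is constant. The divergence term is nonnegative and vanishes exactly when the positive mass is spread uniformly over the prototypes sharing the MoA. It follows that $\ell(Z,A)\geq -\log q_A(Z)+\log K_A$.
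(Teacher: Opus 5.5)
Your proposal is correct and follows the paper's own argument: substitute the factorization $q_j=q_A r_j$ into the averaged negative log-probability and identify the leftover average of $-\log r_j$ as $\log K_A+\KL(U_A\Vert r_A(Z))$. Your explicit checks that strict positivity keeps every logarithm finite and makes the divergence well defined, and the resulting bound $\ell(Z,A)\geq -\log q_A(Z)+\log K_A$, are correct additions.
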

\noindent\textbf{Proof and interpretation.}
Substitute $q_j=q_A r_j$ into the average negative log probability. The first term encourages probability mass on the requested MoA; the KL term distributes it across positive prototypes, while $\log K_A$ is constant for a given positive set. Condition-moment matching separately constrains compound-specific statistics. This decomposition describes the training objective and does not by itself guarantee classification performance on final generated images.

\section{Bridge training, design, and sampling}
\label{app:sampler}
\subsection{Path construction and the role of prior bridge models}
We use the path and coefficients in Equations~\eqref{eq:bridge-general} and~\eqref{eq:bridge}, with $t=1$ at the control and $t=0$ at the perturbed target. Sampling an image pair, an interior time, and Gaussian noise constructs a training state without simulating a trajectory.

I$^2$SB derives an analytic Gaussian bridge conditional on image pairs \citep{liu2023i2sb}, while DDBM learns a bridge score with input/output preconditioning and weighted regression \citep{zhou2024ddbm}. DBIM reuses a trained predictor for implicit sampling \citep{zheng2025dbim}, and ECSI separates path coefficients, preconditioning, and sampling stochasticity \citep{zhang2025ecsi}. We use the linear ECSI path and apply biological response supervision to predicted perturbed images.

\subsection{Conditional distributions and the bridge score}
\label{app:bridge-score}
At an interior time, the path has Gaussian density $q_t(x\mid x_0,x_T)$ with mean $\alpha_t x_0+\beta_t x_T$ and covariance $\gamma_t^2I$. Integrating over the target distribution induced by the training coupling gives $q_t(x\mid x_T,c)$. Differentiating this mixture expresses its score through the posterior mean of the perturbed target:
\begin{align}
m_t(x,x_T,c)&=\E[X_0\mid X_t=x,X_T=x_T,C=c],\nonumber\\
\nabla_x\log q_t(x\mid x_T,c)
&=-\frac{x-\alpha_t m_t(x,x_T,c)-\beta_t x_T}{\gamma_t^2}.
\label{eq:bg-bridge-score}
\end{align}
This identity assumes positive interior noise and differentiation under the integral. It conditions explicitly on $X_T$, whereas the implemented predictor receives $(X_t,t,c)$. App.~\ref{app:predictor-conditioning} explains this distinction and the condition under which the conditional means coincide.

\subsection{The implemented regression and training configuration}
The preconditioned image prediction is
\begin{equation}
D_\theta(X_t,t,c)=c_{\rm skip}(t)X_t+c_{\rm out}(t)
 F_\theta(c_{\rm in}(t)X_t,c_{\rm noise}(t),c).
\label{eq:preconditioning}
\end{equation}

For our path, $\alpha_t=1-t$, $\beta_t=t$, and $\gamma_t=2\gamma_{\max}\sqrt{t(1-t)}$. The BBBC021 preconditioner uses $S=\sigma_{\rm data}^2=0.25$ and $C_{xy}=0.125$:

\begin{align}
H_t&=(\alpha_t^2+\beta_t^2)S+2\alpha_t\beta_t C_{xy}+\gamma_t^2,
&c_{\rm in}(t)&=H_t^{-1/2},\\
c_{\rm skip}(t)&=\frac{\alpha_t S+\beta_t C_{xy}}{H_t},
&c_{\rm out}(t)&=\sqrt{\frac{\beta_t^2(S^2-C_{xy}^2)+S\gamma_t^2}{H_t}},\\
c_{\rm noise}(t)&=250\log(t+10^{-44}),
&w(t)&=c_{\rm out}(t)^{-2}.
\label{eq:app-precond}
\end{align}

Here $C_{xy}$ is a fixed preconditioning parameter. The raw U-Net output is combined with the skip term to produce $D_\theta$, as in Equation~\eqref{eq:preconditioning}; the regression averages squared error over image entries with weight $w(t)$. The bridge and sampler retain the control image explicitly. The implemented U-Net uses the inputs specified in Section~\ref{sec:x0}; the distinction from a fully endpoint-conditioned regression is stated below.

Table~\ref{tab:bridge-training} summarizes the BBBC021 architecture, optimization, and grouped-sampling settings. Condition groups provide multiple images for estimating the feature moments used by biological response supervision; classifier-free condition dropout also supplies unconditional training batches.

\begin{table}[!htbp]
\centering\small
\caption{BBBC021 training configuration.}
\label{tab:bridge-training}
\begin{tabularx}{\linewidth}{@{}lX@{}}
\toprule
Component & Setting\\\midrule
Backbone & 56,201,219-parameter ADM/U-Net; RGB $96\times96$\\
Architecture & Base width 128; four residual blocks; multipliers $(2,2,2)$; attention resolution 48; one head; dropout 0.3\\
Blocks & Scale--shift normalization; new attention order; convolutional resampling and residual up/downsampling disabled\\
Perturbation & 1,024-dimensional molecular embedding; local-batch condition dropout 0.2\\
Batching & Eight devices, 16 cells each; eight cells per compound; global batch 128\\
Optimizer & RAdam; learning rate $10^{-4}$; weight decay 0; default betas; no learning-rate annealing\\
Precision and EMA & FP16 with gradient scaling; model EMA 0.999\\
Time and path noise & Uniform $t\in[0.0001,0.9999)$; $\gamma_{\max}=0.1$\\
Training budget & 150 epochs, 499 grouped batches per epoch\\
\bottomrule
\end{tabularx}
\end{table}
PerturbBridge is trained for 140 epochs on JUMP and 240 epochs on AllenCell.

The 499 batches per epoch are a grouped-sampling budget, $\lceil63781/128\rceil$. Treated cells are combined with randomly sampled controls from the same experimental batch. Each image receives the original pixel dequantization $(\mathrm{pixel}+U[0,1))/255$, independent horizontal/vertical flips with probability 0.3, and mapping to approximately $[-1,1]$. Target and control images pass through these transforms separately. Additional Gaussian noise augmentation of control images is disabled. App.~\ref{app:implementation} gives the biological response losses and their weight schedule.

\subsection{Sampling dynamics and numerical implementation}
At an interior time, the predicted perturbed image determines the residual coordinate:
\begin{equation}
\hat z_t=\frac{x_t-\alpha_t\hat x_0-\beta_t x_T}{\gamma_t}.
\label{eq:noise}
\end{equation}
The deterministic DBIM update combines this residual with the control and predicted perturbed images:
\begin{equation}
x_r=\alpha_r\hat x_0+\beta_r x_T+\gamma_r\hat z_t.
\label{eq:finish}
\end{equation}

For decreasing time, the ECSI-style stochastic update uses
\begin{align}
b_t&=\dot\alpha_t\hat x_0+\dot\beta_t x_T
+\left(\dot\gamma_t+\frac{\epsilon_t}{\gamma_t}\right)\hat z_t,\nonumber\\
x_{t-h}&=x_t-hb_t+\sqrt{2\epsilon_t h}\,\xi,
\qquad \xi\sim\mathcal N(0,I),\quad h>0.
\label{eq:ecsi-main-update}
\end{align}
Dots denote time derivatives. Following ECSI \citep{zhang2025ecsi}, the noise strength for our linear path is $\epsilon_t=2\eta\gamma_{\max}^2$ in the interior, where $\eta$ controls sampling stochasticity. The score-based derivation uses the posterior mean in Equation~\eqref{eq:bg-bridge-score}; the implementation substitutes the learned prediction as discussed below. The final two steps set $\eta=0$ and use the deterministic DBIM update in Equation~\eqref{eq:finish}, which returns $\hat x_0$ at $r=0$.

\noindent\textbf{Numerical grid and updates.}
The sampler uses $N=20$ evaluations at times
\begin{equation}
t_i=\left[t_{\max}^{1/\rho}+\frac{i}{N-1}
\left(t_{\min}^{1/\rho}-t_{\max}^{1/\rho}\right)\right]^\rho,
\quad i=0,\ldots,N-1,
\end{equation}
followed by $t_N=0$, with $t_{\max}=0.9999$, $t_{\min}=0.001$, and $\rho=0.6$. It initializes the state with $x_T$ at the first finite time. Residual-coordinate divisions are evaluated at interior times, where $\gamma_t>0$. The first 18 transitions use the stochastic update, and the final two use Equation~\eqref{eq:finish}. 

\noindent\textbf{Guidance and counting convention.}
The CFG coefficient is fixed at $s=0.2$ for PerturbBridge and all baselines on BBBC021 and JUMP. Under this convention, the guided denoiser returns $(1+s)D_{\rm cond}-sD_{\rm uncond}$, using two separate network forwards. Thus 20 guided-denoiser evaluations correspond to 40 raw forwards. We count logical evaluations consistently with the CellFlux baseline sampling budget.

\noindent\textbf{Conditioning in the implemented predictor.}
\label{app:predictor-conditioning}
The endpoint-conditioned score above uses $\E[X_0\mid X_t,t,c,X_T]$. Our network takes $(X_t,t,c)$ as input and accesses the control through $X_t$. For pure weighted regression with a positive time-only weight, its population optimum is $\E[X_0\mid X_t,t,c]$. These means coincide when $(X_t,t,c)$ is sufficient for $X_0$ given $X_T$. The bridge identity motivates the sampling update. In implementation, we substitute the biologically supervised image prediction for the conditional mean, while retaining the control image explicitly in the update. This substitution is an approximation; exact equality is not established.

\section{Biological metrics and evaluation protocols}
\label{app:metrics}
The protocols below distinguish MoA membership from exact perturbation identity. Evaluators use fixed references from real perturbed data; generated images are evaluated with their requested labels. The standard BBBC021 comparison uses 63,781 treated training images and 5,120 matched test images. The eight excluded compounds are docetaxel, AZ841, cytochalasin D, simvastatin, cycloheximide (listed as ``cyclohexamide'' in the dataset), latrunculin B, epothilone B, and lactacystin. The unseen-compound experiment follows the experimental setting of CellFlux \citep{zhang2025cellflux}, using the same held-out compounds and evaluation protocol; its results are reported separately in Table~\ref{tab:ood}.

\subsection{BBBC021 MoA-Inception evaluation}
\label{app:moa}
\noindent\textbf{Classifier and evaluation protocol.}
We reuse the released CellFlux MoA classifier and evaluator \citep{zhang2025cellflux}. It extracts 2,048-dimensional features from uint8 cell images with the frozen Inception-v3 encoder used by the TorchMetrics FID implementation. A $2048\to512\to K$ multilayer perceptron, with ReLU and dropout 0.5, predicts one of the $K$ MoA labels in the CellFlux vocabulary. The classifier is trained on real perturbed training data using cross-entropy, Adam with learning rate 0.001, batch size 32, and 10 epochs. Preprocessing and augmentation follow the original CellFlux implementation. For each data split, the trained classifier is held fixed when evaluating real perturbed test data and the generated images from each method. In the standard-split comparison, generated RGB PNGs are decoded to $[0,1]$ and converted by $(255x).\mathrm{byte}()$ before the FID encoder. The released head uses 13 outputs including DMSO; the treated training and fixed test populations contain 12 and 9 labels. Its training uses independent horizontal and vertical flips with probability 0.3 and drops incomplete training batches.

During evaluation, the highest-scoring classifier output is compared with the biological label of the requested perturbation.

\noindent\textbf{Accuracy and F1.}
Accuracy is the fraction of correctly classified images. Macro-F1 averages classwise F1 equally, while weighted-F1 weights each class by its true-label sample count. The class set is the union of true and predicted labels, following the original evaluator; undefined classwise F1 terms are set to zero. All three metrics are reported as percentages, using the same definitions for the CellProfiler-based classifier below.

\subsection{BBBC021 MoA evaluation with cell profiling features}
\label{app:cp40}
\noindent\textbf{Images and morphology.}
We evaluate IMPA, CellFlux, Bridge-only, and PerturbBridge generated images. After excluding the eight held-out compounds in the CellFlux split, we use the first 5,120 treated test images in dataset order, with matched sample identifiers across real perturbed data and generated-image sets. Images remain native $96\times96$ uint8 RGB, without resizing or added contrast enhancement.

\noindent\textbf{CellProfiler extraction and preprocessing.}
CellProfiler 4.2.8 native modules provide 2,106 candidate measurements of shape, intensity, texture, granularity, radial distribution, and colocalization, independently of the Inception-v3 encoder used for biological response supervision. Pixels are divided by 255 without channelwise percentile normalization. Native primary-object, secondary-object, and cytoplasm segmentation define the measured regions. Each image contributes measurements from the primary object closest to its center (within 24 pixels) and the corresponding regions, without averaging or taking medians across cells. Feature filtering on real perturbed training data yields $d_{\rm CP}\leq2,106$ retained coordinates. Samples with failed segmentation or no qualifying object are retained, with missing measurements imputed by reference-feature medians; no central-circle fallback is used. The feature mask and imputation statistics are fixed from real perturbed training data and applied to all evaluation sets, giving $f\in\mathbb R^{d_{\rm CP}}$ per image.

\noindent\textbf{Normalization and classification using only real perturbed training data.}
For feature $j$, let $m_j$, $s_j$, and $d_j$ be the median, standard deviation, and scaled MAD on unflipped real perturbed training data:
\begin{align}
 d_j&=1.4826\,\operatorname{median}_i|f_{ij}-m_j|,
 & a_j&=\max(0.001s_j,10^{-6}),\\
 v_j&=\begin{cases}d_j,&d_j>a_j,\\\max(s_j,a_j),&\text{otherwise},\end{cases}
 &z_j&=\operatorname{clip}\!\left(\frac{f_j-m_j}{v_j},-12,12\right).
\end{align}
The normalization parameters fitted to real perturbed training data transform every evaluation set. A $d_{\rm CP}\to512\to13$ MLP with ReLU and dropout 0.5 is trained with cross-entropy, Adam (learning rate 0.001), batch size 32 with the final incomplete batch dropped, and 10 epochs. Horizontal and vertical flips are sampled independently with probability 0.3 from precomputed original/H/V/HV features. The final classifier is frozen, with no test-time flips or test-based epoch selection. Its 13 outputs retain the original vocabulary including DMSO; treated training data contain 12 labels and this test subset contains 9. Accuracy is per-image correctness. Macro- and weighted-F1 use the union of true and predicted labels with undefined terms set to zero. No test or generated images are used to fit the scaler or classifier.

\subsection{DINOv2 perturbation retrieval}
\label{app:retrieval}
Both evaluations use DINOv2 features. JUMP evaluates individual cells; AllenCell evaluates condition-level profiles. Their candidate sets and aggregation levels differ.

\noindent\textbf{JUMP.}
There are 9,077 single-cell test queries. Candidates are ranked by increasing Mahalanobis distance in DINOv2 feature space. The gallery contains real perturbed test cells, with the query's entire plate excluded. Cells with the same perturbation identity are positives. The JUMP retrieval results use PerturbBridge trained for 140 epochs and the Bridge-only baseline. Table~\ref{tab:retrieval} uses the CellFlux outputs; the real perturbed data reference uses queries from real perturbed data under the same exclusion rule.

\noindent\textbf{AllenCell.}
The evaluation contains 24 condition-profile queries. Each query retrieves among 2--5 compound candidates within the same target, using profiles from real perturbed training data as the gallery. The AllenCell retrieval results use PerturbBridge trained for 240 epochs, the Bridge-only baseline, and CellFlux baseline. Table~\ref{tab:retrieval} compares these methods with real perturbed data. This is condition-profile retrieval within a target, without cross-plate exclusion or an unrestricted search over all target--compound combinations. The DINOv2 representation is separate from the Inception-v3 features used by the biological response losses.

\noindent\textbf{Ranking metrics.}
Hit@$k$ is the percentage of queries with at least one positive among the first $k$ eligible candidates. MRR averages the reciprocal of the first-positive rank. For each query, average precision averages precision at the rank of every positive in the eligible gallery; mAP averages this quantity across queries. JUMP reports the median first-positive rank, whereas AllenCell reports the mean first-positive rank. These definitions use the dataset-specific galleries and positive sets described above.

\noindent\textbf{Metric reporting.}
Unless otherwise noted, all quantitative results from our experiments are reported as the mean over three random seeds.
FID and KID use Inception features; FDD and KDD apply Fr\'echet and kernel distribution comparisons in DINOv2-L/14 feature space. Table~\ref{tab:distribution} reports KID multiplied by 100 and unscaled KDD. Dashes indicate unavailable values. Classification and retrieval hit rates are percentages.
Table~\ref{tab:seed-std} reports the available standard deviations for PerturbBridge across three seeds. FID standard deviations range from 0.23 to 0.53 across datasets, while the reported BBBC021 MoA accuracy standard deviation is 0.350 percentage points. These values describe variation across runs, rather than uncertainty across individual cells. The displayed zero for AllenCell Hit@1 is at the reported precision and does not imply an absence of uncertainty. Standard deviations should be read in the units of each metric.

\begin{table}[!htbp]
\centering\small
\caption{Standard deviations across three random seeds for PerturbBridge on BBBC021, JUMP, and AllenCell; reported results are three-seed means. KID is multiplied by 100. MRR and mAP are multiplied by 100 on JUMP and remain unscaled on AllenCell, matching Table~\ref{tab:retrieval}. MoA accuracy, F1 scores, and Hit@1 standard deviations are expressed in percentage points (p.p.). Standard deviations are rounded to three decimal places; dashes indicate unavailable statistics.}
\label{tab:seed-std}
\setlength{\tabcolsep}{8pt}
\begin{tabular}{@{}lrrr@{}}
\toprule
Metric & BBBC021 & JUMP & AllenCell\\
\midrule
FID & 0.493 & 0.230 & 0.530\\
KID $\times100$ & 0.072 & 0.039 & 0.126\\
FDD & 6.855 & 3.410 & 7.940\\
KDD & 0.089 & 0.065 & 0.095\\
MoA Accuracy (p.p.) & 0.350 & --- & ---\\
MoA Macro-F1 (p.p.) & 0.887 & --- & ---\\
MoA Weighted-F1 (p.p.) & 0.450 & --- & ---\\
Retrieval Hit@1 (p.p.) & --- & 0.110 & 0.000\\
MRR & --- & 0.192 & 0.001\\
mAP & --- & 0.009 & 0.001\\
\bottomrule
\end{tabular}
\end{table}

\section{Pseudocode}
\label{app:pseudocode}
Algorithms~\ref{alg:train} and~\ref{alg:sample} summarize training and generation. The predictor returns the preconditioned predicted perturbed image. Bridge coefficients and sampling settings are defined in App.~\ref{app:sampler}; biological response losses and their weights are defined in App.~\ref{app:implementation}.

\texttt{drop\_condition} applies batchwise classifier-free dropout. The frozen encoder passes gradients to the predicted perturbed image. The weights of the two biological response losses specify the loss combination for each dataset. In sampling, \texttt{correct\_prediction} applies the BBBC cached-prediction correction and otherwise returns the current prediction; \texttt{noise\_strength} is zero for the final two transitions.

\par\medskip
\noindent\begin{minipage}{\linewidth}
\noindent\textbf{Training.}
Algorithm~\ref{alg:train} constructs a noisy bridge state and predicts the perturbed image in a single training pass. Bridge regression is always active; batches retaining the perturbation condition additionally receive moment and MoA losses through the frozen encoder. The time gate and training-progress schedule control the contributions of these losses.

\begin{algorithm}[H]
\caption{PerturbBridge: training objective.}
\label{alg:train}
\begin{lstlisting}[style=paperpython]
# net: perturbed-image predictor; phi: frozen encoder
# x0, y, c: grouped target images, controls, conditions
# stats, prototypes: fixed references from real perturbed training data

def training_loss(net, x0, y, c, progress):
    t = sample_time(x0)
    z = randn_like(x0)
    alpha, beta, gamma = bridge_coefficients(t)
    xt = alpha * x0 + beta * y + gamma * z
    cond = drop_condition(c)
    xhat = net(xt, t, cond)
    loss = mean(weight(t) * (xhat - x0)**2)

    # Biological response supervision of the predicted image
    lm, la = bio_weights(progress)
    if cond is not None:
        f = phi(preprocess(xhat))
        g = time_gate(t)
        loss += lm * moment_loss(f, c, stats, g)
        loss += la * moa_loss(f, c, prototypes, g)
    return loss
\end{lstlisting}
\end{algorithm}
\end{minipage}

\par\medskip
\noindent\begin{minipage}{\linewidth}
\noindent\textbf{Generation.}
Algorithm~\ref{alg:sample} initializes the state from a control image and applies classifier-free guidance to successive perturbed-image predictions. The known control and analytic bridge coefficients enter each update explicitly. Stochastic transitions are followed by deterministic finishing steps, without evaluating biological losses during generation.

\begin{algorithm}[H]
\caption{PerturbBridge: sampling.}
\label{alg:sample}
\begin{lstlisting}[style=paperpython]
# net: trained perturbed-image predictor; y: control image
# times: decreasing grid; s: CFG coefficient

def generate(net, y, c, times, s):
    x, previous = y, None
    for t, r in adjacent_pairs(times):
        q = (1 + s) * net(x, t, c) - s * net(x, t, None)
        xhat = correct_prediction(q, previous, t, r)
        a, b, g = bridge_coefficients(t)
        zhat = (x - a * xhat - b * y) / g
        eps = noise_strength(t)

        if eps > 0:
            da, db, dg = bridge_derivatives(t)
            drift = da*xhat + db*y + (dg + eps/g)*zhat
            h = t - r
            x = x - h*drift + sqrt(2*eps*h)*randn_like(x)
        else:
            ar, br, gr = bridge_coefficients(r)
            x = ar*xhat + br*y + gr*zhat
        previous = (q, t)
    return x
\end{lstlisting}
\end{algorithm}
\end{minipage}

\section{Biological response losses: implementation and schedules}
\label{app:implementation}

BBBC021, JUMP, and AllenCell share the same two biological response objectives: condition-moment matching and InfoNCE-style contrastive supervision. Positive and negative sets are constructed according to each dataset's condition definitions and available biological annotations. On BBBC021, prototypes sharing the requested MoA are positives, while prototypes from other MoAs are negatives. The formulation below illustrates these shared objectives using BBBC021; both supervise the training-time predicted perturbed image $\hat x_0=D_\theta(X_t,t,c)$.

\subsection{Differentiable features and caches of real perturbed training data}
The frozen Torch-Fidelity Inception-v3 encoder $\phi$ returns 2,048 features in evaluation mode. Its preprocessing takes the first three channels of $\hat x_0$ and uses straight-through clipping and centered one-bin dither. With $\operatorname{sg}$ denoting stop-gradient,

\begin{align}
\operatorname{STclip}_{[a,b]}(v)&=v+\operatorname{sg}(\operatorname{clip}_{[a,b]}(v)-v),\\
u_i&=\operatorname{STclip}_{[0,1]}\left[
\frac{\operatorname{STclip}_{[-1,1]}(\hat x_{0,i})+1}{2}
+\frac{\delta_i}{255}\right],\quad\delta_i\sim U[-\tfrac12,\tfrac12],\\
f_i&=\phi(255u_i).
\label{eq:bio-interface}
\end{align}

The encoder runs in FP32. Its parameters remain fixed, while gradients through $f_i$ reach $\hat x_{0,i}$ and the generator. Offline MoA evaluation uses uint8 images as described in App.~\ref{app:moa}; it does not add the dither used for biological response supervision.

The cache uses all 63,781 images in the real perturbed training data, without flips, with ordinary clipping and seeded dither (seed 20260810). For each compound condition $c$, it stores $\mu_c=\E[f\mid c]$ and diagonal population variance $v_c=\E[f^{\odot2}\mid c]-\mu_c^{\odot2}$. The map $A(c)$ supplies its MoA label. Conditions remain separate when they share a MoA.

\subsection{Time gate and condition-moment matching}
The two biological response losses share a smooth per-image gate:

\begin{equation}
g_i=\operatorname{sigmoid}\left(\frac{a-|c_{\rm out}(t_i)|}{b}\right),
\quad a=\frac{1}{\sqrt{13}},\quad b=\frac{a}{10}.
\label{eq:bio-gate}
\end{equation}

Smaller $|c_{\rm out}|$ gives a larger gate. For a local-batch condition group $I_c$, the weighted moments and valid groups are

\begin{align}
G_c&=\sum_{i\in I_c}g_i,\quad Z_c=\max(G_c,10^{-6}),\\
\hat\mu_c&=\frac{1}{Z_c}\sum_{i\in I_c}g_i f_i,\qquad
\hat v_c=\max\left\{\frac{1}{Z_c}\sum_{i\in I_c}g_if_i^{\odot2}-\hat\mu_c^{\odot2},0\right\},\\
\mathcal V&=\{c:|I_c|\geq4,\ G_c>10^{-6}\}.
\label{eq:bio-moments-exact}
\end{align}

We insert these moments into Equation~\eqref{eq:mom}, with $\mathcal C_B=\mathcal V$, $d_f=2048$, and $\lambda_{\mathrm{var}}=0.5$, averaging equally over valid conditions and feature dimensions. Each group contains eight cells; groups with fewer than four cells are excluded. The moment loss is zero when no group is valid.

\subsection{Multi-positive MoA prototype contrast}
For BBBC021, we denote the general contrastive loss $\mathcal L_{\rm con}$ by $\mathcal L_{\rm MoA}$ and its weight $\lambda_{\rm con}$ by $\lambda_{\rm MoA}$, reflecting the use of MoA labels to define positive prototypes.
The cache supplies one unit-normalized prototype $p_j$ per training condition, and $z_i=f_i/\|f_i\|_2$ is the normalized predicted-image feature. The 26 BBBC021 prototypes define
\begin{equation}
q_{ij}=\frac{\exp(z_i^\top p_j/\tau)}{\sum_{k=1}^{26}\exp(z_i^\top p_k/\tau)},
\qquad \tau=0.1.
\label{eq:softmax}
\end{equation}
All prototypes sharing the requested MoA, including the prototype of the requested condition, are positives in Equation~\eqref{eq:moaloss}. The implementation uses equal temperature and base temperature, so their ratio is one. The batch reduction is
\begin{equation}
\mathcal L_{\rm MoA}=\frac{\sum_i g_i\ell_i^{\rm MoA}}{\max(\sum_i g_i,10^{-12})}.
\label{eq:bio-supcon-exact}
\end{equation}
This retains the within-positive-set term in Proposition~\ref{prop:contrast}. Moment matching preserves compound-specific feature statistics, while MoA contrast supervises shared biological identity. Both pass gradients through the frozen encoder.

\subsection{Biological response loss weight schedule}
For BBBC021, biological response supervision is inactive for the first 40\% of training. The moment and MoA weights then increase linearly from zero to 0.1 and 0.01, respectively, in the joint objective of Equation~\eqref{eq:total}. The gate $g_i$ weights samples within each loss, while this schedule controls their overall contribution. Batches with dropped perturbation conditions use bridge regression alone.
\section{Qualitative examples of generated cells}
\label{app:qualitative}
These examples complement the biological evaluations in Section~\ref{sec:experiments}. They compare selected real reference images with multiple PerturbBridge samples under the indicated conditions. The comparisons illustrate morphology and variation among displayed cells; they are not paired reconstructions or quantitative tests of population diversity.

\noindent\textbf{Overview across seven structures.}
Fig.~\ref{fig:allen-structure-overview} organizes the comparison by compound and imaged structure, placing PerturbBridge samples and real references in parallel grids. The vehicle-control row provides context for the treatment rows, and highlighted entries direct attention to selected differences in subcellular organization. Blank entries denote combinations absent from the dataset. This overview extends the selected compound--structure comparisons in Fig.~\ref{fig:generated-real}.

\begin{figure}[H]
\centering
\includegraphics[width=\linewidth]{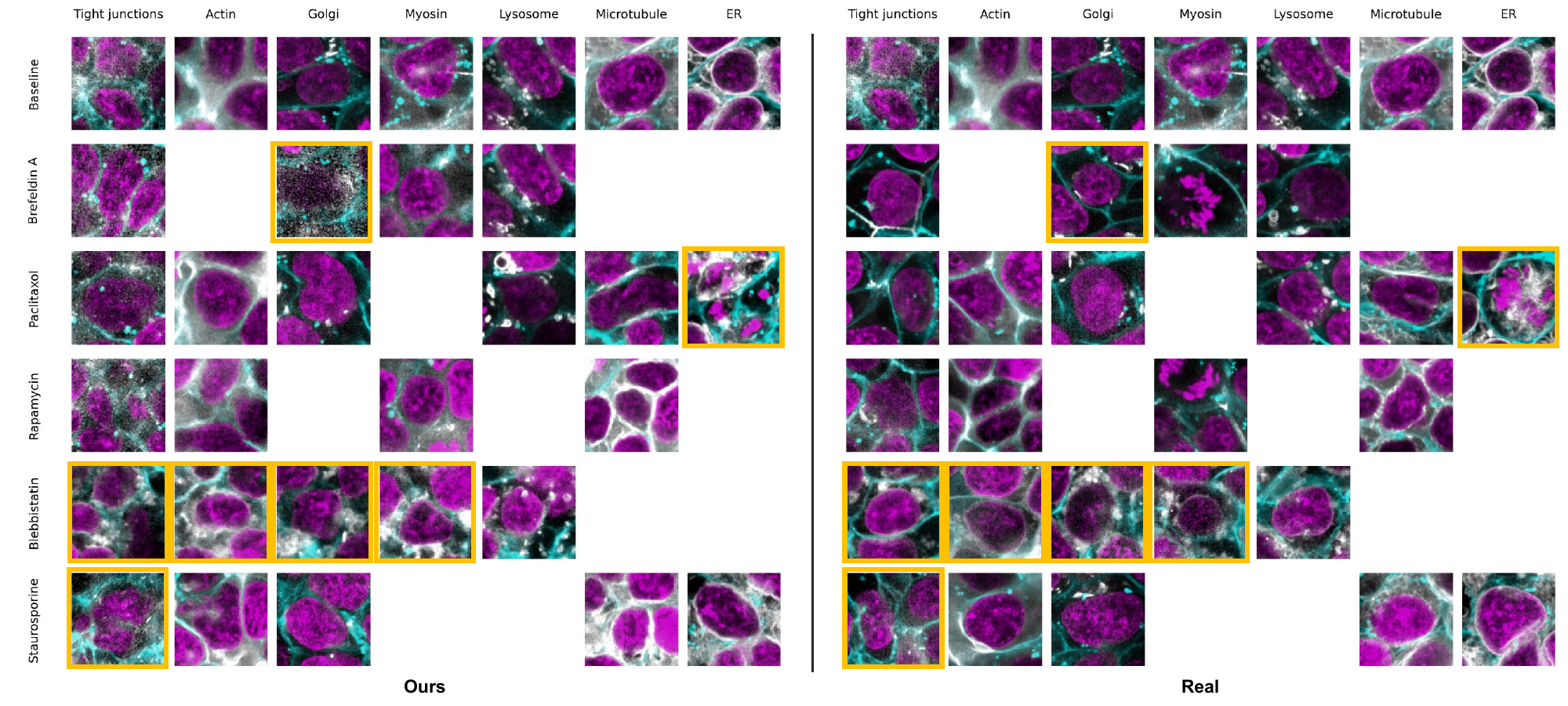}
\caption{Drug-induced changes across seven cellular structures on AllenCell. PerturbBridge samples (left) and real references (right) are arranged by compound and structure. Vehicle denotes the control; SNB denotes (S)-nitro-blebbistatin. Yellow frames highlight selected comparisons; blank entries denote combinations absent from the dataset.}
\label{fig:allen-structure-overview}
\end{figure}

\subsection{BBBC021: examples across compound conditions}
\par\medskip
\noindent\begin{minipage}{\linewidth}
\noindent\textbf{Cytochalasin B and AZ138.}
Fig.~\ref{fig:bbbc-qualitative-1} contrasts two conditions with different MoA annotations. The cytochalasin B examples show irregular cell outlines and variation in peripheral staining, while the AZ138 examples include rounded or elongated cells with bright nuclear regions. Generated samples vary in shape and staining within each condition, allowing visual comparison with the real perturbed references. MoA preservation is assessed quantitatively in Tables~\ref{tab:cp40} and~\ref{tab:moa}.

\begin{figure}[H]
\centering
\includegraphics[page=1,width=\linewidth,height=0.72\textheight,keepaspectratio]{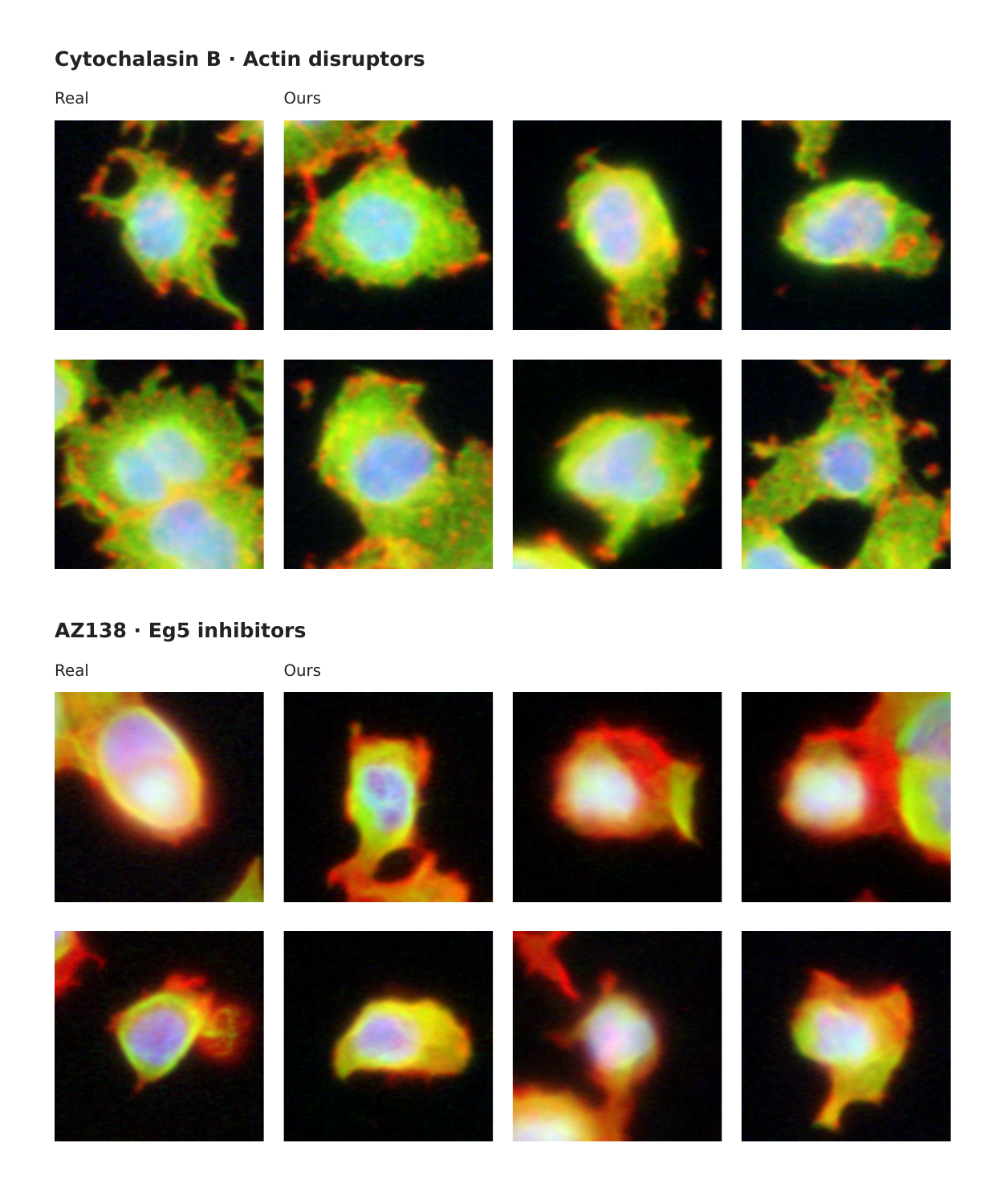}
\caption{BBBC021 examples for cytochalasin B and AZ138. In each compound block, the left column contains real perturbed reference images and the remaining columns contain PerturbBridge samples.}
\label{fig:bbbc-qualitative-1}
\end{figure}
\end{minipage}

\par\medskip
\noindent\begin{minipage}{\linewidth}
\noindent\textbf{AZ258 and PP-2.}
Fig.~\ref{fig:bbbc-qualitative-2} extends the comparison to two additional compound conditions. The displayed cells differ in nuclear shape, cell extent, and the spatial distribution of channel intensities. Multiple generated images show how these properties vary under the same requested condition. Individual generated cells need not reproduce the geometry of a particular reference cell, because training and evaluation concern conditional cell populations.

\begin{figure}[H]
\centering
\includegraphics[page=2,width=\linewidth,height=0.72\textheight,keepaspectratio]{figures/bbbc021_qualitative_appendix.pdf}
\caption{Additional BBBC021 examples for AZ258 and PP-2. Real perturbed references appear in the left column and PerturbBridge samples in the remaining columns.}
\label{fig:bbbc-qualitative-2}
\end{figure}
\end{minipage}

\subsection{AllenCell: structure-specific comparisons}

\par\medskip
\noindent\begin{minipage}{\linewidth}
\noindent\textbf{Brefeldin A and paclitaxel.}
Fig.~\ref{fig:allen-qualitative-1} provides larger views of Golgi and microtubule imaging conditions. Vehicle and treated references allow changes in the location and organization of the fluorescent signal to be inspected alongside multiple generated samples. The samples exhibit variation in nuclear geometry and surrounding staining, rather than repeating a single cell appearance. These selected views complement the condition-profile retrieval results in Table~\ref{tab:retrieval}.

\begin{figure}[H]
\centering
\includegraphics[page=1,width=\linewidth,height=0.72\textheight,keepaspectratio]{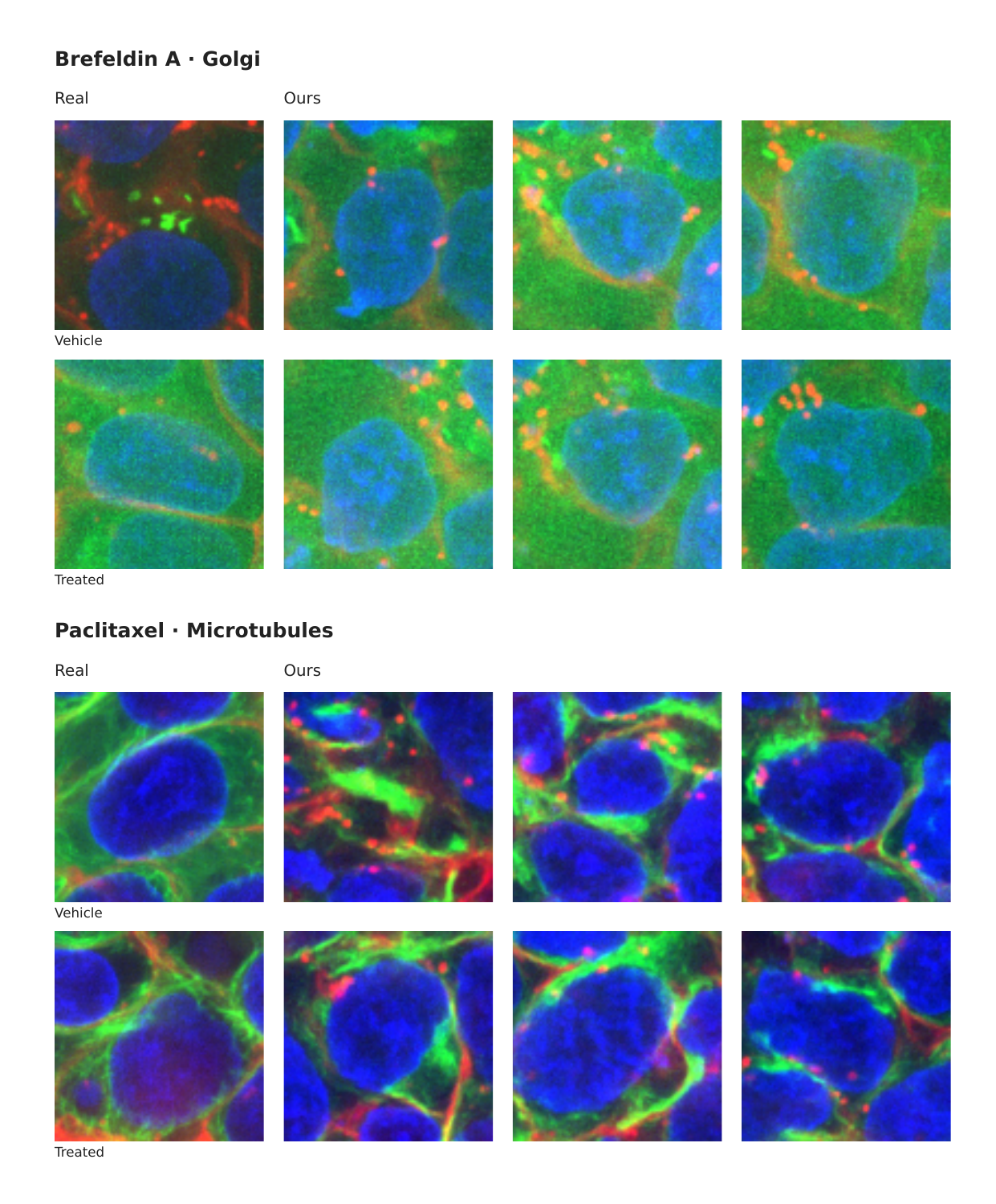}
\caption{AllenCell examples for brefeldin A (Golgi) and paclitaxel (microtubules). The left column shows real vehicle and treated references; the remaining columns show PerturbBridge samples.}
\label{fig:allen-qualitative-1}
\end{figure}
\end{minipage}

\par\medskip
\noindent\begin{minipage}{\linewidth}
\noindent\textbf{Two imaging targets under the same compound.}
Fig.~\ref{fig:allen-qualitative-2} shows SNB examples for Golgi and tight-junction imaging. The two blocks emphasize different spatial features of the cellular response, so each should be compared with its corresponding vehicle and treated references. Together with the preceding examples, this comparison illustrates the distinct spatial appearances represented under different imaging targets.

\begin{figure}[H]
\centering
\includegraphics[page=2,width=\linewidth,height=0.72\textheight,keepaspectratio]{figures/allen_qualitative_appendix.pdf}
\caption{AllenCell examples for (S)-nitro-blebbistatin (SNB) under Golgi and tight-junction imaging. Real vehicle and treated references appear in the left column; PerturbBridge samples occupy the remaining columns.}
\label{fig:allen-qualitative-2}
\end{figure}
\end{minipage}

\end{document}